\newtheorem{theorem}{Theorem}
\newtheorem{proposition}[theorem]{Proposition}
\newcommand{\BibTeX}{B\kern-.05em{\sc i\kern-.025em b}\kern-.08em\TeX}
\begin{document}

\newcolumntype{d}[1]{D{.}{.}{#1}}
\newcolumntype{e}[1]{D{±}{\pm}{#1}}


\begin{frontmatter}


\paperid{268} 

\title{Improving Calibration by Relating Focal Loss, Temperature Scaling, and Properness}
%


\author[A]{\fnms{Viacheslav}~\snm{Komisarenko}\thanks{Corresponding Author. Email: viacheslav.komisarenko@ut.ee.}}
\author[A]{\fnms{Meelis}~\snm{Kull}}

\address[A]{Institute of Computer Science, University of Tartu, Estonia}



\begin{abstract}
Proper losses such as cross-entropy incentivize classifiers to produce class probabilities that are well-calibrated on the training data. Due to the generalization gap, these classifiers tend to become overconfident on the test data, mandating calibration methods such as temperature scaling. The focal loss is not proper, but training with it has been shown to often result in classifiers that are better calibrated on test data. Our first contribution is a simple explanation about why focal loss training often leads to better calibration than cross-entropy training.
For this, we prove that focal loss can be decomposed into a confidence-raising transformation and a proper loss. This is why focal loss pushes the model to provide under-confident predictions on the training data, resulting in being better calibrated on the test data, due to the generalization gap. Secondly, we reveal a strong connection between temperature scaling and focal loss through its confidence-raising transformation, which we refer to as the focal calibration map. Thirdly, we propose focal temperature scaling - a new post-hoc calibration method combining focal calibration and temperature scaling. Our experiments on three image classification datasets demonstrate that focal temperature scaling outperforms standard temperature scaling.
\end{abstract}

\end{frontmatter}


\section{Introduction}
Modern machine learning systems demonstrate remarkable accuracy, often surpassing human capabilities in classification problems within textual and image domains. However, these systems still lack reliable confidence estimation in supporting their decisions, a critical issue in applications where the cost of mistakes is substantial, such as medical diagnostics, autonomous vehicle navigation, and financial decision-making.

 Neural classification models typically output a real-valued logits (scores) vector. 
 The higher the logit is, the higher the model confidence that the corresponding class should be predicted as positive.
 Quantifying model confidence in probabilities, not logits, is often more convenient. 
 
The softmax transformation is traditionally applied on top of logits to obtain predicted probabilities. Still, alternative approaches exist, including entmax and sparsemax \cite{peters2019sparse}, designed to control the sparseness of vector representations in natural language processing, and tempered exponential functions derived under convex duality assumption \cite{amid2019robust}.

Accuracy and Area Under the Curve (AUC) assess a model's discriminative power, while the Expected Calibration Error (ECE) evaluates calibration, measuring how well model confidence aligns with true class probabilities. ECE is essentially the average absolute difference between predicted probabilities and empirical accuracies across probability bins. Due to finite sample sizes, this expected value is approximated using binning, which inherently introduces approximation errors.

 The cross-entropy loss function is a common choice for training a classification model. Originating from information theory, cross-entropy measures the dissimilarity between two probability distributions, specifically, the true distribution of class labels and the predictions provided by a model. However, a notable limitation of cross-entropy trained models is their tendency to be overconfident during test time, wherein the predicted probabilities frequently exceed the actual frequencies of the corresponding classes \cite{guo2017calibration}.


Post-hoc calibration, a transformation on top of predicted logits or probabilities, is usually applied to improve the calibration of trained models \cite{silva2023classifier}. The calibration method typically involves selecting from a set of potential parametrized transformations, which are applied to adjust the model's output. This selection process is usually performed using a validation set, which helps in identifying the most effective transformation to enhance the model's calibration by adjusting the predicted probabilities to align unknown perfectly calibrated probabilities more accurately while keeping the trained model unchanged. 

Temperature scaling \cite{guo2017calibration} is a notably straightforward yet effective single-parameter calibration method. The temperature parameter $T$ adjusts the model's confidence by dividing the logits before the softmax, i.e. the last-layer activation function. Due to its simplicity and robust performance, it is often favoured over more complex calibration methods, such as matrix and vector scaling \cite{guo2017calibration}, Bayesian binning \cite{naeini2015obtaining}, and Dirichlet calibration \cite{kull2019beyond}. 
Calibration methods in deep learning have been reviewed by \citep{wang2023calibration}.

Cross-entropy is a proper loss function, meaning that it is minimized when the predicted probabilities match the true underlying probabilities of the classes, thereby incentivizing calibrated probability forecasts on the training set.
Focal loss, parametrized by positive $\gamma$, is a recent alternative to cross-entropy, with enhanced calibration on the test set, and comparable or even higher accuracy \cite{lin2017focal,mukhoti2020calibrating,wang2022calibrating}. Focal loss is not proper, but somehow, it outperforms proper losses such as cross-entropy. 

We were driven by the question of what is special in the focal loss expression that explains its calibration performance, why it could be beneficial to intentionally push predicted probabilities away from the true underlying class probabilities (because that's what focal loss does) and still receive better calibration on the test set. 







In this study, our contributions are threefold:
\begin{itemize}

    \item We deconstruct the focal loss into a composition of a confidence-raising transformation (which we call the focal calibration map) and a proper loss. We show how this decomposition explains why focal loss encourages models to generate under-confident predictions on the training data, yielding better calibration on test data. 
    \item We demonstrate a notable link between temperature scaling and focal calibration maps. More specifically, for the binary case, we show that the focal calibration map parametrized with $\gamma$ could be bounded between temperature scaling maps with $\frac{1}{T}=\gamma+1$ and $\frac{1}{T}=\gamma+1-\frac{\log(\gamma+1)}{2}$ for all possible logits. For the multiclass case, we show that the focal calibration map for any $\gamma > 0$ increases the model confidence, similar to temperature scaling with $T<1$. Moreover, we conducted an experiment showing that in the binary case, the difference between focal calibration and temperature scaling is almost neglectable when their parameters are related via the expression $\frac{1}{T} \approx 0.95\cdot \gamma+0.85$. For three- and four-dimensional cases, the closest focal calibration and temperature scaling transformations have close to the linear relationship of parameters $\gamma$ and $\frac{1}{T}$ but with higher approximation error than in binary classification.
    \item We introduce focal temperature scaling — a novel post-hoc calibration technique that composes focal calibration with temperature scaling. Our experiments on three image classification datasets and several learning strategies (mainly focal or cross-entropy-based and calibration-oriented) show that focal temperature scaling enhances the model's calibration compared to standard temperature scaling.
\end{itemize}

\section{Focal loss}

The cross-entropy loss function is a default choice for many machine learning classification problems. It has roots in maximum likelihood estimation and is a well-studied loss function with the simple expression $L_{CE}(p, y)=-\sum_{i=1}^n y_i \cdot \log(p_i)$, where $p$ and $y$ denote the predicted probability vector and the true class vector, respectively, $log$ hereafter is the natural logarithm and $n$ is the number of classes.

It is a proper loss function, which implies that the model will yield almost perfect calibration on a training set. However, on a validation and test set, cross-entropy results are often overconfident \cite{guo2017calibration}.

Focal loss \cite{lin2017focal}, which could be considered a modification on top of the cross-entropy loss, has gained widespread acceptance for classification problems. 
It is defined as $L_{FL}(p, y)=-\sum_{i=1}^n y_i \cdot (1-p_i)^{\gamma} \log(p_i)$, where the multiplier $(1-p)^\gamma$ serves as a scaling factor that for $\gamma>1$ reduces the impact of easier examples (where $p$ is close to 1) while increasing the influence of difficult examples (where $p$ is far from 1) on the loss. This adjustment helps to focus the model’s learning on harder instances that are incorrectly classified, enhancing its predictive accuracy on more challenging cases.

While the focal loss is improper, it was shown to be classification-calibrated \cite{charoenphakdee2021focal}, implying that the minimizer of the focal loss always has the best possible accuracy (Bayes-optimal classifier). Moreover, the authors proved the existence of the closed-form transformation that recovers true class probability from the focal minimizer predictions. In practice, reaching perfect focal loss minimization is almost impossible due to reasons such as data noise and imperfect learning algorithms. Still, the classification-calibration gives a theoretical justification for focal loss usage when optimizing for accuracy. 

Convexity is an essential property of most commonly used loss functions such as cross-entropy and Brier score. Convexity is helpful as many convex optimization methods and properties could be used to enhance the model training process. However, some methods assume, in fact, convexity with respect to input features, which is very hard to achieve in deep learning settings. At the same time, losses are generally called convex if they are convex with respect to the predicted probability vector. 
In the following proposition, we show that the focal loss is convex with respect to the predicted probability vector. 

\begin{proposition}
 Focal loss is convex with respect to the first (prediction) argument $p$ for all $\gamma \geq 0$.
\end{proposition}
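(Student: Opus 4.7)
The plan is to reduce the multivariate convexity claim to a one-dimensional statement and then verify it by analyzing the second derivative. First, I would observe that $L_{FL}(p,y)=\sum_{i=1}^n y_i\cdot f(p_i)$ with $f(q)=-(1-q)^\gamma \log q$ and $y_i\geq 0$, so the Hessian of $L_{FL}$ in $p$ is diagonal with entries $y_i f''(p_i)$. Hence the convexity of $L_{FL}$ in $p$ on $(0,1]^n$ follows immediately once $f$ is shown to be convex on $(0,1]$.

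For the univariate analysis, I would differentiate $f$ twice by the product rule. Factoring out the common positive factor $(1-q)^{\gamma-2}/q^2$, the expression becomes
\[
f''(q) \;=\; \frac{(1-q)^{\gamma-2}}{q^2}\,\bigl[\,-\gamma(\gamma-1)\,q^2\log q \;+\; 2\gamma\, q(1-q) \;+\; (1-q)^2\,\bigr],
\]
so the task reduces to proving that the bracket $B(q)$ is non-negative on $(0,1)$ for every $\gamma\geq 0$.

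I would then split on $\gamma$. When $\gamma\geq 1$, the coefficient $-\gamma(\gamma-1)$ is non-positive and $\log q$ is non-positive, so all three summands in $B(q)$ are non-negative and the claim is immediate. The real work is in the range $0\leq \gamma\leq 1$, where the first summand turns non-positive and must be compensated by the other two. The tool I would invoke is the elementary inequality $\log q \geq 1 - 1/q$ for $q\in(0,1]$, which after multiplication by $q^2>0$ gives $q^2\log q \geq -q(1-q)$. Multiplying this bound by the non-negative factor $\gamma(1-\gamma)=-\gamma(\gamma-1)$ and substituting into $B(q)$ telescopes the $q(1-q)$ contributions into $\gamma(\gamma+1)\,q(1-q) + (1-q)^2$, which is manifestly non-negative.

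The main obstacle is precisely this sub-case $0\leq\gamma\leq 1$: there $(1-q)^\gamma$ is concave while $-\log q$ is convex, so the sign of the first second-derivative term flips and one cannot conclude term by term, nor does the product of a convex and a concave factor automatically yield a convex function. The crux is that the logarithmic inequality $\log q\geq 1-1/q$ provides exactly the slack needed for the negative contribution to be absorbed into the $2\gamma\,q(1-q)$ term, leaving a non-negative remainder and closing the proof for all $\gamma\geq 0$.
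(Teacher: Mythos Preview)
Your proof is correct. Both you and the paper reduce the multiclass claim to the diagonal-Hessian observation and then to showing $f''(q)\geq 0$ for the scalar function $f(q)=-(1-q)^\gamma\log q$, and both arrive at the same second-derivative expression. The only difference lies in the final non-negativity check: the paper rewrites $f''$ (equivalently your bracket $B(q)$) as a quadratic in $\gamma$,
\[
B(q)=\gamma^2\bigl(-q^2\log q\bigr)+\gamma\bigl(q^2\log q+2q(1-q)\bigr)+(1-q)^2,
\]
and notes that all three coefficients are non-negative on $(0,1)$ (the linear one because $2-2q+q\log q$ is decreasing with value $0$ at $q=1$), which settles every $\gamma\geq 0$ at once. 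Your route instead splits on $\gamma\geq 1$ versus $0\leq\gamma\leq 1$ and in the latter case invokes $\log q\geq 1-1/q$ to absorb the negative term. Both arguments are elementary and of comparable length; the paper's organization avoids the case distinction, while yours makes explicit where the only potential difficulty sits (namely $0<\gamma<1$, where $(1-q)^\gamma$ is concave and the product-of-convex argument fails).
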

\begin{proof}
The proof is based on an analysis of the sign of focal loss's second derivative and is presented in the Appendix.
\end{proof}

The hyperparameter $\gamma \geq 0$ is usually chosen on a validation set, while the authors of the original paper suggested value $\gamma=2$ to improve accuracy. Model calibration is noticeably impacted by the choice of $\gamma$ \cite{wang2022calibrating}.

In their comprehensive study, Mukhoti et al. 
\cite{mukhoti2020calibrating} conducted an extensive experimental comparison involving cross-entropy and focal losses, along with the Brier score and Label smoothing. Their findings indicate that models trained using focal loss have improved calibration compared to those trained with cross-entropy. They recommend an optimal $\gamma$ value of 3 based on experiments conducted within 
$\gamma \in \{1, 2, 3\}$. Additionally, they observed that models trained with focal loss generally demonstrate noticeably enhanced performance in terms of area under the curve (AUC) and log-loss. The authors also noted that minimising ECE on the validation set to determine optimal temperature $T$ yielded better calibration performance than minimizing the log-loss. 

Temperature scaling can further improve the calibration of models trained with focal loss. The improvement is often minor as the optimal temperature is frequently close to 1 \cite{mukhoti2020calibrating}. This contrasts with models trained using cross-entropy, which significantly benefit from temperature scaling with an optimal temperature often reported to be in the range of $[1.5, 3]$ \cite{guo2017calibration}.

Interestingly, optimal $\gamma$ for log-loss minimization may differ from the value that minimizes the ECE score or accuracy \cite{mukhoti2020calibrating,wang2022calibrating}. Furthermore, $\gamma$ is also extremely dataset-specific, e.g. $\gamma=10$ was the best on both log-loss and ECE in \cite{wang2022calibrating}.

In this paper, we do not specifically address the issue of class imbalance. However, it is worth noting that focal loss has been reported to perform effectively in scenarios characterized by class imbalance also \cite{wang2022calibrating}.


\section{Proper deconstruction of the focal loss}

\subsection{Motivation}

Proper losses are minimized by the predictions equal to ground-truth distribution. To illustrate it, let us consider a binary classification example. Let there be a point in feature space, meaning a fixed feature combination, which occurs 100 times in a training set with 80 actual positives and 20 actual negatives. What should the model predict for this feature combination? The model will output the number that minimizes the loss, meaning $\hat{p}=argmin_{p \in [0, 1]} (\sum_{i=1}^{20} L(p, 0) + \sum_{i=21}^{100} L(p, 1))$. The great thing about proper losses is that they will always predict $\hat{p}=0.8$ for this situation, which is an intuitively right thing to do. It makes properness a desirable property for loss functions.

Optimizing for the improper focal loss will result in the prediction of a lower value of around 0.62 for $\gamma=2$. Moreover, by varying parameter $\gamma>0$, we could receive any value between $0.8$ and $0.5$ (decision boundary). Despite this counterintuitive decision, the models trained with focal loss produce quite well-calibrated probabilities and high accuracy on the test set, in contrast with proper losses such as cross-entropy. 

We aim to identify and locate part of the focal loss expression that stands for pushing predicted probabilities away from perfectly calibrated ones. We believe that the remaining part of the focal loss expression could be connected with properness as during test time, models trained with focal loss produces quite well-calibrated predictions.





\subsection{Decomposition in the binary case}
Let $p$ be the instance's ground-truth probability of being positive, and $q$ be the model's predicted probability for class 1. Let the loss on each instance be defined as $L(q, y)$, where $y\in\{0, 1\}.$ The conditional risk, which is an expected loss under the ground-truth distribution, is defined as $R(p, q) = p \cdot L(q, 1) + (1-p) \cdot L(1-q, 0)$. The loss is called (strictly) proper if for any ground-truth distribution, the associated conditional risk is (uniquely) minimized by the prediction equal to the ground truth:

\begin{equation*}
    \forall p\in [0, 1]: argmin_{q\in[0, 1]} R(p, q) = p
\end{equation*}

In the following proposition, we deconstruct the binary focal loss into a composition of a proper loss and a monotonic probability transformation (which we refer to as the focal calibration map):

\begin{proposition}
 Let $L(q, y)$ be a binary focal loss parametrized with some $\gamma > 0$. Then it can be decomposed into a bijective function $\hat{p}: [0, 1]\rightarrow [0, 1]$ (which could be seen as a fixed calibration map) and a proper loss $L^*: [0,1] \times \{0, 1\} \rightarrow \mathbb{R}^+$ such that $L^*(q, y)=L(\hat{p}^{-1}(q), y)$ 
 and $\hat{p}$ is defined as:
\begin{gather*}
    \hat{p}(q) = \frac{1}{1 + \bigl(\frac{1-q}{q}\bigr)^{\gamma} \frac{(1-q)-\gamma q \cdot \log(q)}{q-\gamma (1-q) \cdot \log(1-q)}}
\end{gather*}
\end{proposition}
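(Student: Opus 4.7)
The plan is to derive $\hat{p}$ as the map that sends a prediction $q$ to the unique ground-truth probability $p$ for which $q$ is the minimiser of the binary focal loss's conditional risk. Since the earlier proposition shows that focal loss is convex in its first argument, $R(p,\cdot)$ is convex for every $p$, so the minimiser is characterised by the first-order condition $\partial R/\partial q=0$. Once this map is obtained, properness of $L^*(q,y):=L(\hat p^{-1}(q),y)$ is essentially built in by construction.

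First I would write the binary focal loss explicitly as $L(q,1)=-(1-q)^\gamma\log q$ and $L(1-q,0)=-q^\gamma\log(1-q)$, so
$$R(p,q)=-p(1-q)^\gamma\log q - (1-p)q^\gamma\log(1-q).$$
Differentiating in $q$ and factoring gives
$$\frac{\partial R}{\partial q}= -p\,\frac{(1-q)^{\gamma-1}}{q}\bigl[(1-q)-\gamma q\log q\bigr]+(1-p)\,\frac{q^{\gamma-1}}{1-q}\bigl[q-\gamma(1-q)\log(1-q)\bigr].$$
Setting this to zero and solving for $p$ isolates $p=\hat p(q)$ in exactly the displayed form: the ratio $\bigl((1-q)/q\bigr)^\gamma$ arises from $(1-q)^{\gamma-1}(1-q)/(q\cdot q^{\gamma-1})$, and the remaining fraction is the ratio of the two bracketed factors.

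Second I would verify that $\hat p$ is a bijection of $[0,1]$ onto itself. A direct manipulation of the closed form yields the symmetry $\hat p(1-q)=1-\hat p(q)$, which together with the limits at the endpoints gives $\hat p(0)=0$, $\hat p(1/2)=1/2$, $\hat p(1)=1$. Continuity on $(0,1)$ is immediate; strict monotonicity I would deduce not from the explicit formula but from the implicit function theorem applied to $\partial R/\partial q=0$, using $\partial^2 R/\partial q^2>0$ (convexity) and $\partial^2 R/\partial p\partial q<0$ (the cross-term computed from the expressions above is strictly negative on $(0,1)$).

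Third I would check properness. By definition $R^*(p,q)=p\,L(\hat p^{-1}(q),1)+(1-p)\,L(1-\hat p^{-1}(q),0)$, where the inherited symmetry $\hat p^{-1}(1-q)=1-\hat p^{-1}(q)$ was used to rewrite the second term. Hence $R^*(p,q)=R(p,\hat p^{-1}(q))$, and the change of variables $q'=\hat p^{-1}(q)$ turns minimisation in $q$ into minimisation of $R(p,q')$, whose unique minimiser is $q'=\hat p^{-1}(p)$, i.e. $q=p$. Thus $L^*$ is (strictly) proper. The main obstacle is step two: the algebraic rearrangement to match the stated form must be carried out carefully, and the global strict monotonicity of $\hat p$ is not syntactically evident from the formula and really does need the convexity input from the preceding proposition rather than a direct sign analysis of the explicit expression.
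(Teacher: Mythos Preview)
Your argument is correct and follows the same underlying idea as the paper's---$\hat p$ is the inverse link obtained from the stationarity condition $\partial R/\partial q=0$---but the execution differs in two places worth noting. For strict monotonicity of $\hat p$ the paper argues directly that in the expression $\hat p(q)=\bigl(1+\tfrac{L_1'(1-q)}{L_0'(q)}\bigr)^{-1}$ the numerator $L_1'(1-q)$ is decreasing and the denominator $L_0'(q)$ increasing; your implicit-function-theorem route using $\partial^2 R/\partial q^2>0$ (which is exactly Proposition~1, applied to the convex combination $R=p\,L(\cdot,1)+(1-p)\,L(\cdot,0)$) and $\partial^2 R/\partial p\,\partial q=A'(q)-B'(q)<0$ is a clean alternative that makes the dependence on the preceding convexity result explicit. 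For properness the paper imports the weight-function criterion of Buja et al.\ and checks that $w(p):=L_1^{*\prime}(1-p)+L_0^{*\prime}(p)$ does the job via the chain rule; your change-of-variables argument $R^*(p,q)=R(p,\hat p^{-1}(q))$ together with the symmetry $\hat p(1-q)=1-\hat p(q)$ is more self-contained and arguably more transparent, since properness then follows immediately from the very definition of $\hat p$ as the map sending $q$ to the $p$ whose risk it minimises. Both routes are valid; yours trades an external citation for a short direct calculation.
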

\begin{proof}
The full proof is presented in the Appendix, first, showing that $\hat{p}(\cdot)$ is a bijection and then showing the properness of $L^*$.
The proof is using known facts from the theory of binary composite losses \cite{reid2010composite}.
\end{proof}

This decomposition suggests that training a focal loss is, in fact, equivalent to training a specific proper loss, which is applied on top of an additional fixed calibration layer in the network. 
The decomposition does not change the training process because neither the predicted probabilities (before the derived calibration layer) nor the loss value is changed, which implies that backward gradients will remain the same.



The proper part of the binary focal loss decomposition could be presented via the inverse (which exists because $\hat{p}_{\gamma}(q)$ was shown to be a bijection in the proof of Proposition 2) of the focal calibration map:

\begin{equation*}
    L^{*}(q, y=1) = -(1-\hat{p}^{-1}(q))^{\gamma} \cdot \log(\hat{p}^{-1}(q))
\end{equation*}

Due to the complex composition of exponential, logarithm and power functions in the focal calibration expression $\hat{p}_{\gamma}(q)$, its inverse, and consequently, the "properized" binary focal loss $L^*$, cannot be represented as a closed-form expression. Still, it is possible to approximate and tabulate these values numerically.


The binary focal calibration map and the proper part of the focal loss (on the actual positive instance) are shown in Fig.~\ref{loss_and_link}. It could be seen that the focal calibration has a traditional sigmoid-like shape for $\gamma>0$ similarly to temperature scaling. Technically, negative $\gamma$ could also be plugged into the calibration map, yielding an "inverse"-sigmoid shape on the opposite side of the diagonal shown on the right side in Fig.~\ref{loss_and_link}. The proper part of the focal loss compared with a regular focal loss with the same $\gamma$ has lower loss values for predictions lower than 0.5 and higher loss otherwise, which could also be inferred from the calibration map shape.

\begin{figure}[t]
\centering
\centerline{\includegraphics[width=0.5\textwidth,trim={1.8cm 0cm 1.6cm 0.8cm}, clip]{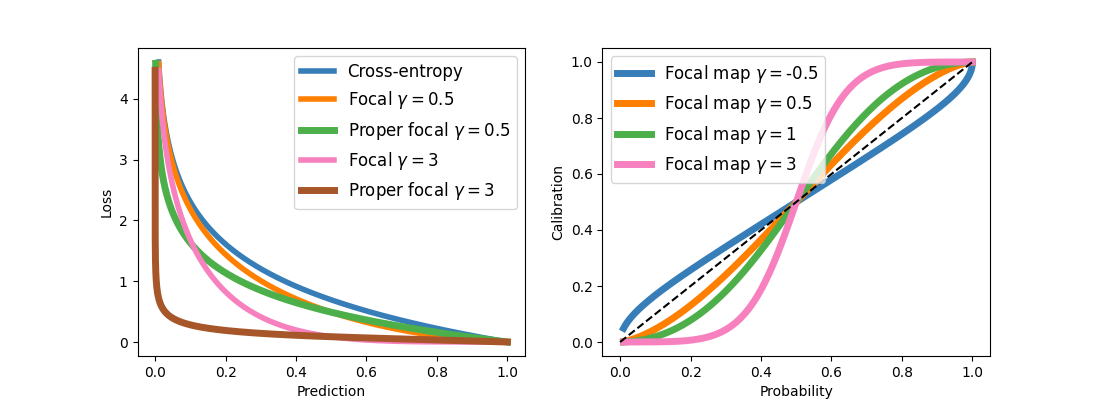}}
\caption{Left: proper part of the focal loss compared to standard focal loss with parameters $0.5$ and $3$ and cross-entropy on the true positive instance. Right: Binary focal calibration for different $\gamma$ parameters for all possible predicted probabilities.}
\label{loss_and_link}
 \vspace{0.5cm}
\end{figure}

As the focal calibration map has a similar to temperature scaling sigmoid-like shape, we approximated the temperature scaling with the focal calibration map with the parameter $\gamma$, minimising the maximum absolute difference between these calibration maps. Essentially, we ran a Linear Regression to fit these two transformations. It can be seen in Fig.~\ref{binary_fit} with the linear fit $\frac{1}{T} \approx 0.95 \cdot \gamma + 0.85$ between the inverse temperature and $\gamma$ parameter. Moreover, the maximum approximation error is marginal, lower than $1e^{-3}$ in the probability scale.

\begin{figure}[t]
\centering
\centerline{\includegraphics[width=0.5\textwidth,trim={1.8cm 0cm 1.6cm 0.8cm}, clip]{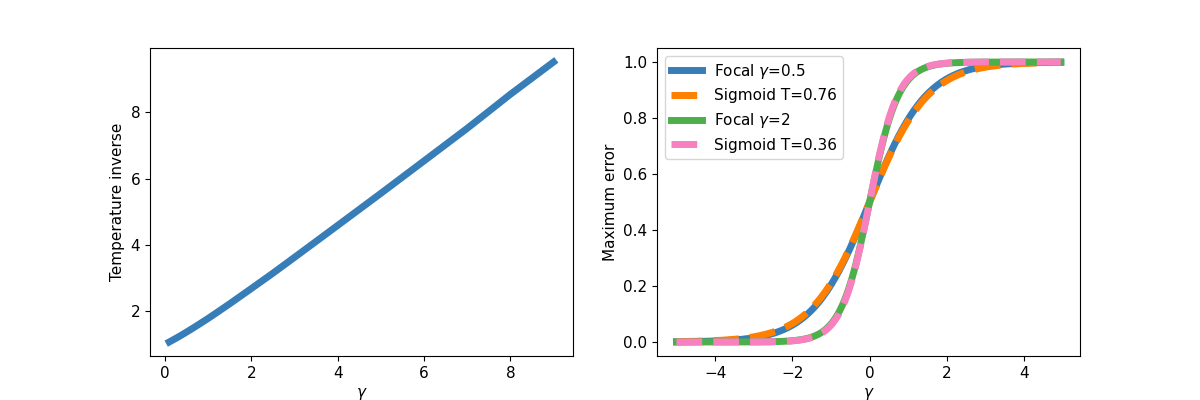}}
\caption{Left: relationship between focal calibration with parameter $\gamma$ and temperature scaling parameter $\frac{1}{T}$ fitted to minimize the maximum deviation (over all logits) of these two transformations. Right: focal calibration with parameters $0.5$ and $3$ visualized in logit scale together with closest temperature scaling maps (with $T=0.76$ and $T=0.36$ correspondingly).}
\label{binary_fit}
  \vspace{0.6cm}
\end{figure}

Given a surprisingly small approximation error of the linear fit, we tried to find lower and upper bounds for the focal calibration map using temperature scaling transformations, see the following Proposition 3:

\begin{proposition}
    Let $FC(s)=\hat{p}_{\gamma}(\frac{1}{1+e^{-s}})$ be a focal calibration function applied on top of sigmoid with a logit $s$. 
    Then, the focal calibration could be bounded between two temperature scaling maps with $T=\frac{1}{\gamma+1}$ and  $T=\frac{1}{\gamma+1 - \frac{\log(\gamma+1)}{2}}$ such that 
    \begin{align*}
    \forall s < 0 \quad \frac{1}{1+e^{-\frac{s}{\gamma+1}}} > FC(s) > \frac{1}{1+e^{-\frac{s}{\gamma+1 - \frac{\log(\gamma+1)}{2}}}} \\
    \forall s \geq 0: \quad \frac{1}{1+e^{-\frac{s}{\gamma+1}}}< FC(s) < \frac{1}{1+e^{-\frac{s}{\gamma+1 - \frac{\log(\gamma+1)}{2}}}}
    \end{align*}
\end{proposition}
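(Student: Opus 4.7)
The plan is to rewrite both chains of inequalities as comparisons between
\[
A(s) := \frac{1}{FC(s)} - 1 = e^{-\gamma s}\,\frac{(1-q)-\gamma q\log q}{q-\gamma(1-q)\log(1-q)}, \qquad q = \sigma(s),
\]
and simple exponentials in $s$, and then to reduce each comparison to one-variable calculus. A direct substitution $q \mapsto 1-q$ into the formula of Proposition~2 swaps the roles of $(1-q)-\gamma q\log q$ and $q-\gamma(1-q)\log(1-q)$ and inverts the factor $((1-q)/q)^{\gamma}$, giving $\hat p(1-q) = 1-\hat p(q)$ and hence $FC(-s) = 1-FC(s)$; the temperature-scaling sigmoids enjoy the same symmetry, so the four inequalities reduce under $s\mapsto-s$ to the two corresponding to $s > 0$. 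Substituting $q = \sigma(s)$ and multiplying the inner numerator and denominator by $q$ and $1-q$ respectively yields $A(s) = e^{-(\gamma+1)s}\bigl(1+M(q)\bigr)$, where $M(q) := \gamma[u(q)-v(q)]/[q(1-q)+\gamma v(q)]$, $u(q) := -q^{2}\log q$, and $v(q) := -(1-q)^{2}\log(1-q)$. The two claimed bounds on $FC$ then translate to the two-sided estimate $1 < 1 + M(q(s)) < (\gamma+1)^{s/2}$ for all $s > 0$.

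For the upper bound on $FC$ (the inequality $1+M > 1$), I need $h(q) := u(q) - v(q) > 0$ on $(1/2, 1)$. A short differentiation gives $h'(q) = 2H(q) - 1$, where $H(q) = -q\log q - (1-q)\log(1-q)$ is the binary entropy. Since $H$ is strictly decreasing on $(1/2, 1)$ from $H(1/2) = \log 2 > 1/2$ to $H(1) = 0$, $h'$ has exactly one zero in $(1/2, 1)$, so $h$ is first strictly increasing and then strictly decreasing there. Combined with $h(1/2) = 0$ and $\lim_{q\to 1^-} h(q) = 0$, this forces $h > 0$ throughout the open interval, yielding the upper bound.

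For the lower bound $1+M < (\gamma+1)^{s/2}$, I would study the auxiliary function $\Psi(s) := \bigl(1+M(q(s))\bigr)(\gamma+1)^{-s/2}$ on $[0,\infty)$; since $\Psi(0) = 1$ and $\lim_{s\to\infty}\Psi(s) = 0$ (as $1+M \to 1+\gamma$ is finite while $(\gamma+1)^{-s/2} \to 0$), it suffices to show $\Psi \le 1$ throughout. A Taylor expansion at $s = 0$ gives $M'(0) = \gamma(2\log 2-1)/(1+\gamma\log 2)$, and the local slope inequality $M'(0) < \tfrac12\log(\gamma+1)$ needed near the origin rearranges into the polynomial inequality $\gamma^{2}\log^{2}2 + 2\gamma(1-\log 2) + (3 - 4\log 2) > 0$, which holds for every $\gamma \ge 0$ because all three coefficients are strictly positive.

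The hard part will be extending this endpoint control to a global one. Numerical experiments indicate that $s \mapsto M(q(s))/s$ is not monotone and attains its supremum at some interior $s^{\ast}(\gamma) > 0$, and $s \mapsto M(q(s))$ is neither convex nor concave in $s$, so naive convexity arguments will not close the gap. My intended approach is to analyse $\Psi'(s) = (\gamma+1)^{-s/2}\bigl[M'(s) - \tfrac12(1+M(s))\log(\gamma+1)\bigr]$, show that it changes sign at most once on $(0,\infty)$, and at any interior critical point $s^{\ast}$ translate the stationarity condition $M'(s^{\ast})/(1+M(s^{\ast})) = \tfrac12\log(\gamma+1)$ into a one-variable inequality in $q$ that can be verified using the same monotonicity facts about $H$ used in the upper bound step. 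A fallback tactic I would keep in reserve is to bypass $\log(1+M) \le M$ entirely and bound the ratio $[q(1-q)+\gamma u]/[q(1-q)+\gamma v]$ directly by $(\gamma+1)^{s/2}$ via tailored estimates on $u/v$ as a function of $q$.
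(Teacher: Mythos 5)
Your setup is sound and matches the paper's reduction: after substituting $q=\sigma(s)$ you arrive at $FC(s)=\bigl(1+e^{-(\gamma+1)s}(1+M)\bigr)^{-1}$, where your $1+M$ is exactly the paper's ratio $f_\gamma(s)=\bigl(1+\gamma e^{s}\log(1+e^{-s})\bigr)/\bigl(1+\gamma e^{-s}\log(1+e^{s})\bigr)$, and the proposition is equivalent to $1<f_\gamma(s)<(\gamma+1)^{s/2}$ for $s>0$. Your symmetry observation $FC(-s)=1-FC(s)$ is correct and is a cleaner way to dispose of the $s<0$ case than the paper's separate re-derivation. Your proof of the first inequality is also complete and arguably nicer than the paper's: reducing $M>0$ to $h=u-v>0$ on $(1/2,1)$ via $h'=2H(q)-1$ with $H$ the binary entropy is a self-contained, verifiable argument, whereas the paper argues via monotonicity of the numerator and denominator of $f_\gamma$.

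The genuine gap is the second inequality, $f_\gamma(s)<(\gamma+1)^{s/2}$ for all $s>0$, which is the substantive content of the proposition (it is what pins down the constant $\tfrac{1}{2}\log(\gamma+1)$). What you have is only the endpoint condition $\Psi(0)=1$, $\Psi'(0)\le 0$ (i.e.\ $M'(0)<\tfrac12\log(\gamma+1)$), the limit $\Psi(\infty)=0$, and a \emph{plan} to show that $\Psi'$ changes sign at most once; you explicitly acknowledge that monotonicity and convexity shortcuts fail, and the single-sign-change claim is asserted, not proved. Boundary control plus a vanishing limit does not exclude $\Psi$ exceeding $1$ on an interior interval, so as written the lower bound on $FC$ is not established. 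The paper closes this step by a different, more elementary device: it bounds $\partial f_\gamma/\partial s$ globally using crude term-by-term estimates ($\gamma\log 2\le\gamma e^{s}\log(1+e^{-s})<\gamma$, etc.) and compares against the derivative of the bounding exponential, separately for $s\ge 0$ and $s<0$. If you want to complete your route, you will either need to actually establish the single-sign-change property of $\Psi'$ (which looks at least as hard as the original claim), or fall back to a global derivative bound in the paper's style; also note that your reduction of $M'(0)<\tfrac12\log(\gamma+1)$ to a polynomial inequality in $\gamma$ cannot be an exact rearrangement (the $\log(\gamma+1)$ does not cancel), so you should state which lower bound on $\log(\gamma+1)$ you are invoking there.
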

\begin{proof}
The proof is provided in the Appendix.
\end{proof}
To evaluate the tightness of the derived bounds, we conduct an experimental assessment. For each value of $\gamma$, we systematically explore all possible temperatures $T>0$ in increments of 0.001. We assess the focal calibration for all logits within the range $(-20, 20)$, using a step size of 0.001, to determine if it resides between two temperature scaling maps. Through this method, we identify and select the maximal lower bound and the minimal upper bound, thereby establishing the tightest experimental bounds achievable.

The theoretical and experimental lower and upper temperature scaling bounds for focal calibration are shown in Fig.~\ref{exp_vs_theory}. It could be seen that the experimental bounds are considerably tighter than the theoretical, implying that the theoretical assumptions could be strengthened to suggest even tighter theoretical bounds, at least for logits in the range $(-20, 20)$. For example, the theoretical result suggests the focal calibration with $\gamma=4$ for all possible logits are bounded between temperature scaling with $T=0.2$ and $T=0.238$. If we double-check this result numerically, we could suggest even tighter bounds $T=0.206$ and $T=0.218$.

\begin{figure}[t]
\centering
\centerline{\includegraphics[width=0.5\textwidth,trim={1.8cm 0cm 1.6cm 0.8cm}, clip]{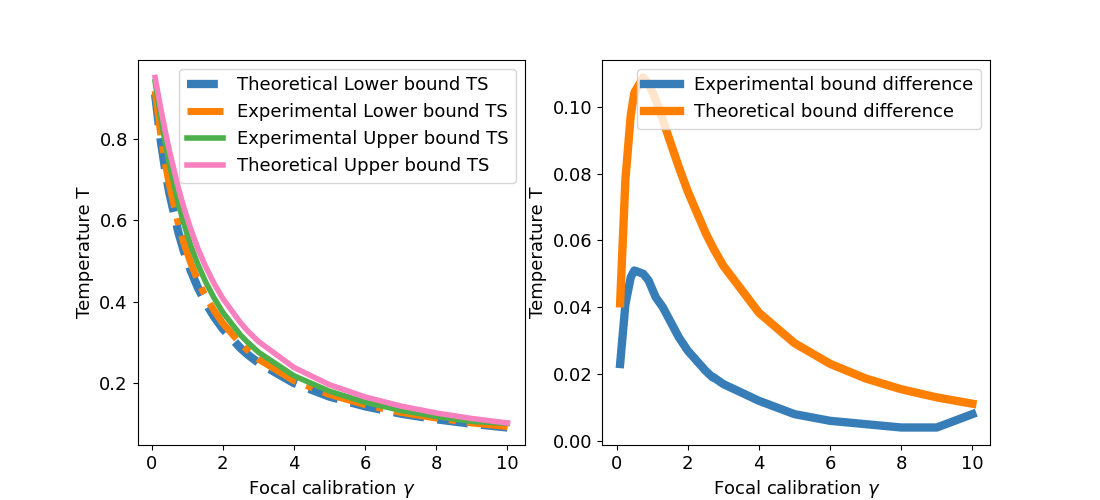}}
\caption{Left: theoretical and experimental bounds for focal calibration with temperature scaling maps. Right: theoretical and experimental width of the bounds for different $\gamma$}
\label{exp_vs_theory}
  \vspace{0.6cm}
\end{figure}



\subsection{Decomposition in the multiclass case}

Let $p=(p_1,...,p_n)$ be the ground truth vector and $q=(q_1,...,q_n)$ the prediction vector. We could consider a conditional risk $R(p, q)$, which is an expected loss under the ground-truth labels distribution $p$: $R(p, q) = \sum_{i=1}^{n} p_i \cdot L(q, y_i=1)$. The properness condition could be written as

\begin{equation*}
  argmin_{q \in \Delta^n} R(p, q) = p
\end{equation*}

Here $\Delta^n$ is a n-dimensional probability simplex, meaning that $q$ should be a valid probability distribution.

We decompose the multiclass focal loss into a monotonic probability transformation (focal calibration map) and a proper loss. The derived focal calibration map is observed to equal with the transformation that maps the outputs from focal minimizers to the true class probabilities \cite{charoenphakdee2021focal}.

\begin{proposition}
\label{prop:multiclass}
    Let $L(q, y)$ be a multiclass focal loss parametrized with some $\gamma > 0$. Then, it can be deconstructed into a composition of a bijective function $\hat{p}(q)$ and a proper loss $L^{*}(q, y)$ such that:
     \begin{align}
     \label{eq:focal}
        \hat{p}_j(q_1,...,q_n) = \frac{\frac{1}{(1-q_j)^{\gamma} \cdot (\frac{\gamma \cdot \log(q_j)}{1-q_j} - \frac{1}{q_j})}}{\sum_{k=1}^{n} \frac{1}{(1-q_k)^{\gamma} \cdot (\frac{\gamma \cdot \log(q_k)}{1-q_k} - \frac{1}{q_k})}} \quad \forall j=1..n\\
        L^{*}(q, y) = L(\hat{p_1}^{-1}(q_1,...,q_n),...,\hat{p_n}^{-1}(q_1,...,q_n)) \nonumber
\end{align}
\end{proposition}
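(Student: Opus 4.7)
The plan is to mirror the binary argument from Proposition~2, handling the simplex constraint via Lagrange multipliers. First I would form the conditional risk $R(p,q)=-\sum_{i=1}^n p_i (1-q_i)^\gamma \log q_i$, which by Proposition~1 is convex in $q$ as a non-negative combination of convex functions, so any interior critical point of the Lagrangian is the unique global minimizer. Differentiating the Lagrangian $R(p,q)-\lambda(\sum_j q_j - 1)$ with respect to each $q_j$ and setting the result to zero yields, after factoring out $(1-q_j)^\gamma$, the stationarity condition $p_j (1-q_j)^\gamma \bigl[\gamma\log(q_j)/(1-q_j) - 1/q_j\bigr] = \lambda$ for every $j$. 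Solving for $p_j$ and eliminating $\lambda$ using $\sum_j p_j = 1$ recovers exactly the expression for $\hat{p}_j(q)$ in the statement. A quick sign check (both $\log q_j<0$ and $-1/q_j<0$ for $q_j\in(0,1)$) shows each coordinate of $\hat{p}$ comes out positive after normalization, so $\hat{p}$ maps the open simplex into itself.

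The next step is to interpret $\hat{p}$ as the inverse of the map $M:\Delta^n\to\Delta^n$ sending a ground-truth distribution $p$ to its focal-loss minimizer $M(p)=\arg\min_{q\in\Delta^n} R(p,q)$. Existence and uniqueness of $M(p)$ in the interior follow from convexity together with a boundary-blowup argument: $R\to+\infty$ as any $q_j\to 0$ when $p_j>0$, so the minimizer is interior. The Lagrange calculation above yields $\hat{p}(M(p))=p$ directly, hence $\hat{p}\circ M=\mathrm{id}$. I expect the main obstacle to be the opposite direction: showing that $\hat{p}$ is injective, equivalently that every interior $q$ arises as $M(p)$ for some $p$. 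The cleanest route, I think, is to take an arbitrary interior $q$, set $p:=\hat{p}(q)$, and then observe that by construction the first-order condition is satisfied at $q$ for this $p$; convexity then forces $q=M(p)$, giving $M\circ\hat{p}=\mathrm{id}$ and hence bijectivity on the open simplex. Since an explicit inversion of the formula for $\hat{p}$ appears hopeless in closed form, routing the argument through the well-behaved map $M$ is essential.

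Finally, properness of $L^*(q,y):=L(\hat{p}^{-1}(q),y)$ reduces to a change of variables. Its conditional risk factors as $R^*(p,q)=\sum_i p_i L(\hat{p}^{-1}(q),y_i{=}1)=R(p,\hat{p}^{-1}(q))$. Since $\hat{p}$ is a bijection on $\Delta^n$, minimizing $R^*(p,\cdot)$ over $q$ is equivalent to minimizing $R(p,q')$ over $q'\in\Delta^n$, whose unique minimizer is $q'=M(p)$. Therefore the minimizer of $R^*(p,\cdot)$ is $q=\hat{p}(M(p))=p$, which is exactly the definition of properness, completing the plan.
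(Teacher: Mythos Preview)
Your proposal is correct and follows essentially the same Lagrange-multiplier route as the paper: both extract $\hat{p}$ from the first-order stationarity conditions of the focal conditional risk on the simplex and then deduce properness of $L^{*}$ via the change of variables $q\mapsto\hat{p}^{-1}(q)$. If anything, your two-sided inverse argument through the minimizer map $M$ is more careful about bijectivity than the paper's own proof, which simply asserts it ``based on the same reasoning as for the binary case.''
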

\begin{proof}
The proof is written in the Appendix.
\end{proof}





The proper part of the multiclass focal loss is also impossible to write as a closed-form expression. 
However, we could tabulate and visualize this function using focal calibration and standard focal loss expression: $L^{*} (q_1,...,q_n)=L(\hat{p}^{-1}(q_1,...,q_n))$.

Let us consider a three-dimensional probability simplex projected onto a two-dimensional triangle to understand the derived focal calibration map and the proper part of the focal loss. Each vertex of this triangle represents a scenario in which the entire probability mass is allocated to a specific class (see Fig.~\ref{loss}).

To get more insights about the geometric shape of the proper part of the multiclass focal loss, we visualized isolines for three selected percentiles over all possible focal proper part (for $\gamma=1$ and $\gamma=3$) loss values compared with the Brier score and cross-entropy in Fig.~\ref{loss}. We presented the conditional risk associated with a specific ground-truth vector $p=(0.55, 0.3, 0.15)$. Due to properness, conditional risks for all considered losses are minimized at $q=(0.55, 0.3, 0.15)$. Notably, the Brier score displays circular isolines, reflective of its quadratic loss expression. The isolines for cross-entropy exhibit a more oval-like configuration. In contrast, the isolines for the proper part of the focal loss assume a shape more akin to a curved triangle, a trend that becomes more pronounced with increasing values of $\gamma$.


The focal calibration and corresponding closest temperature scaling maps visualized as a directional arrows map projected into a two-dimensional triangle for two different parameters $\gamma$ is shown in Fig.~\ref{activation}. Each arrow originates from a point representing an initial probability input and ends at a point corresponding to the output of the focal calibration (temperature scaling) transformation. It could be seen that arrow directions are all from the centre towards corners, which implies that the predicted distribution becomes sharper, meaning that the highest predicted probability becomes higher while all others - become lower. Moreover, the higher $\gamma$ is, the sharper the effect becomes. 

This confidence-raising effect is similar to a temperature scaling calibration with a temperature parameter lower than one $T<1$, meaning the highest predicted class probability is mapped into an even higher probability.

Next we prove this as a proposition:

\begin{proposition}
    Let the model's output for a test instance be a predicted probability vector $q=(q_1,...,q_n)$ such that the predicted class is $j$: $j = argmax_{1 \leq i \leq n} (q_i)$. Then, if we apply focal calibration to the prediction $q$, the $j$-th coordinate of the resulting vector will be higher than or equal to all coordinates of the initial prediction $q$: $\hat{p}(q)_j \geq max_{1 \leq i \leq n} (q_i)=q_j$.
\end{proposition}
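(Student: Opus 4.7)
The plan is to reduce the multiclass inequality $\hat{p}(q)_j \geq q_j$ to a one-variable monotonicity statement about a carefully chosen auxiliary function. Let $f(q) := (1-q)^\gamma\bigl(\gamma\log(q)/(1-q) - 1/q\bigr)$ so that Proposition~\ref{prop:multiclass} gives $\hat{p}_j(q) = (1/f(q_j))/\sum_k(1/f(q_k))$. Since $\log q < 0$ and $-1/q < 0$ on $(0,1)$, both parenthesised terms are negative, so $f(q) < 0$ throughout and both the numerator and denominator of the expression for $\hat{p}_j$ are negative. Multiplying the desired inequality by the (negative) denominator flips the direction; after rearranging and using $\sum_k q_k = 1$ to absorb the $k = j$ term, the claim becomes equivalent to
\begin{equation*}
  \sum_{k \neq j} q_k\bigl[\phi(q_j) - \phi(q_k)\bigr] \leq 0,\qquad \phi(q) := \frac{1}{q\,f(q)}.
\end{equation*}

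I would then try to establish this by a pointwise argument: because $q_j = \max_i q_i$, it suffices that $\phi$ be decreasing on $(0,1)$, for then every bracket on the left is non-positive. Writing $\phi = 1/h$ with $h(q) := q f(q) = (1-q)^{\gamma-1}(\gamma q\log q - (1-q))$, which is negative throughout $(0,1)$, $\phi$ is decreasing if and only if $h$ is increasing. A routine differentiation yields
\begin{equation*}
  h'(q) = \gamma(1-q)^{\gamma-2}\bigl[(1-\gamma q)\log q + 2(1-q)\bigr],
\end{equation*}
and since the prefactor is positive (as $(1-q)^{\gamma-2}>0$ for $\gamma>0$ and $q\in(0,1)$), the task collapses to showing that the bracketed expression is non-negative on $(0,1)$.

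I expect the bracket analysis to be the hard step. For $q \geq 1/\gamma$ both $(1-\gamma q)$ and $\log q$ are non-positive so their product is non-negative, and combined with $2(1-q) > 0$ the bracket is clearly positive. The delicate regime is $q < 1/\gamma$, where the first summand is negative and one must show that $2(1-q)$ dominates $(1-\gamma q)(-\log q)$; I would attack this using a sharp one-sided bound comparing $-\log q$ to a polynomial in $(1-q)$, or by reducing to checking the bracket at $q=1$ (where it vanishes) together with a sign argument on its derivative. If the pointwise monotonicity proves too coarse to carry the weighted sum, an alternative route is available: Proposition~\ref{prop:multiclass} already identifies $\hat{p}$ as the map that sends focal minimisers back to their targets, so one can instead read the inequality $\hat{p}(q)_j \geq q_j$ off the first-order optimality conditions of the focal loss at its minimiser, combined with the classification-calibration of focal loss (which keeps the argmax fixed) to conclude that the confidence at the argmax can only increase under $\hat{p}$.
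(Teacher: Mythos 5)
Your reduction of $\hat{p}(q)_j \ge q_j$ to $\sum_{k\neq j} q_k\bigl[\phi(q_j)-\phi(q_k)\bigr]\le 0$ with $\phi(q)=1/(q\,f(q))$ is algebraically correct, and so is your derivative formula $h'(q)=\gamma(1-q)^{\gamma-2}\bigl[(1-\gamma q)\log q+2(1-q)\bigr]$. The gap is that the bracket is \emph{not} non-negative on $(0,1)$: as $q\to 0^+$ the factor $1-\gamma q$ tends to $1$ while $\log q\to-\infty$, so $(1-\gamma q)\log q+2(1-q)\to-\infty$ for every $\gamma>0$. Hence $h$ is decreasing on a neighbourhood of $0$ and $\phi$ is \emph{increasing} there; for $\gamma=1$, $h(q)=q\log q-(1-q)$ attains its minimum near $q\approx 0.13$, and $\phi$ rises from $-1$ at $0^+$ to about $-0.88$ before falling. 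So the regime $q<1/\gamma$ that you flag as ``delicate'' is not merely delicate --- the sign you need fails outright, and the pointwise monotonicity route cannot be completed. Your fallback via classification-calibration does not close the gap either: classification-calibration only guarantees the argmax is preserved, not that the top coordinate increases.

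The paper argues differently: it proves that a concave $f$ with $f(0)=0$ satisfies $f(\lambda x)\ge\lambda f(x)$, applies this with $f=1/L'$ and $\lambda=q_i/q_j$, and sums over $i$ to get $q_j\sum_i f(q_i)\ge f(q_j)$, which is the required sign of the numerator. However, your reduction is sharp enough to expose a genuine problem with the statement itself in the regime where $\phi$ increases. For a vector with equal tail mass $b=(1-a)/(n-1)$ one checks that $\hat{p}(q)_j\ge a$ is \emph{equivalent} to $\phi(a)\le\phi(b)$, and this fails for instance at $\gamma=1$, $n=100$, $q=(0.109,0.009,\dots,0.009)$: with $L'(q)=\log q-(1-q)/q$ one gets $\phi(0.109)\approx-0.883>-0.968\approx\phi(0.009)$ and indeed $\hat{p}(q)_1\approx 0.100<0.109=q_1$. (Equivalently, $1/L'$ is convex rather than concave near $0$, so the paper's own concavity step is also only valid away from small $q$.) The claim is safe in the binary case and in the three-class setting visualised in the paper, where $q_j\ge 1/3$ stays clear of the problematic region, but neither your argument nor the paper's covers the full simplex for large $n$; any correct proof must restrict $q_j$ away from the region where $q\mapsto 1/(qL'(q))$ is increasing.
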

\begin{proof}
The proof is listed in the Appendix.
\end{proof}
\begin{figure}[t]
\centering
\centerline{\includegraphics[width=\columnwidth]{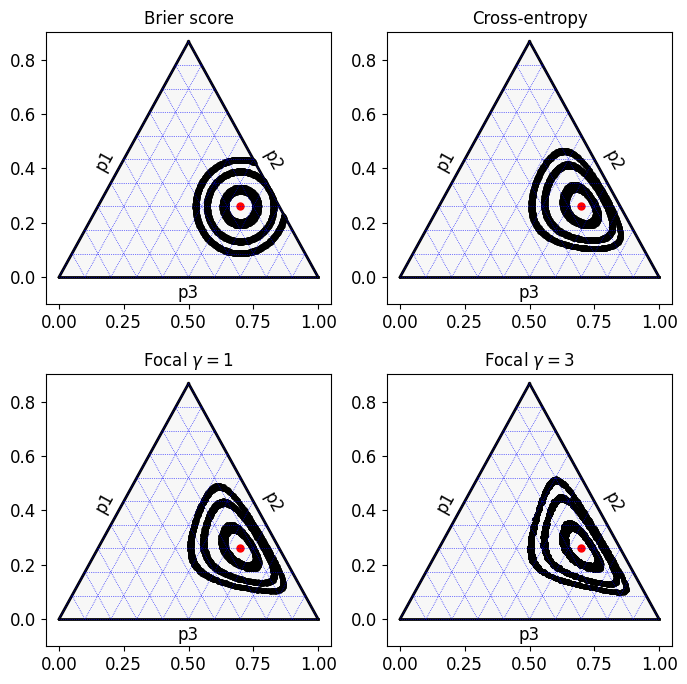}}
\caption{Brier score, cross-entropy and properized focal loss ($\gamma=1,3$) conditional risk isolines (defined by loss percentiles $3\%,12\%,20\%$) for ground truth probability $p=(0.55, 0.3, 0.15)$}
\label{loss}
  \vspace{0.6cm}
\end{figure}

Similarly to the binary case, we quantify the similarity between multiclass focal calibration and temperature scaling by finding the temperature's inverse $\frac{1}{T}$ for each $\gamma$ (considering the step of 0.001 for both $T$ and $\gamma$ over the range $(0, 10)$ and step of 0.01 for each logit in range $(-5, 5)$) that minimizes the largest difference between focal calibration and temperature scaling transformations over a three-dimensional probability simplex. 

The relationship between matching temperature inverse $\frac{1}{T}$ and $\gamma$ is shown in Fig.~\ref{temperature}. We could see a close to a linear trend that could be approximated with a function $\frac{1}{T} \approx 0.64\cdot \gamma + 0.91$.
For all positive $\gamma$, the temperature is lower than one, which verifies that the focal calibration has a sharpness effect. Moreover, lower temperature, which means higher sharpness, corresponds to higher $\gamma$. We should note that temperature scaling is not equivalent to focal calibration: the approximation error in a three-dimensional case rises from 0 (identity transformation for $\gamma=0$) to $8\%$ (of the probability scale) for $\gamma=9$. This means that the focal calibration is more complex and cannot be reliably approximated with a single simple scaling parameter of temperature scaling. 

We considered the difference between focal calibration for $\gamma=[0.5, 3]$ and the correspondent closest temperature scaling mapping for all points of the probability simplex shown in Fig.~\ref{diff}. The highest difference between these two transformations could be seen near the corners, while it non-uniformly decreases towards the centre.

We also considered this approximation in a four-dimensional case and received a similar linear dependency trend: $\frac{1}{T} \approx 0.47 \cdot \gamma + 0.85$ with similar maximum approximation error of up to $8\%$ on probability scale for larger $\gamma$ as in the three-dimensional scale. 
\section{Proper decomposition implications}
\subsection{Generalization from the training set to the test set}
The built-in focal calibration properties could hint towards the calibration performance of the focal loss function. For example, proper losses, such as cross-entropy, have almost perfect calibration performance on the training set but are generally overconfident on the validation set \cite{guo2017calibration}. A typical level of overconfidence could approximately be quantified with a temperature of 2-2.5 \cite{guo2017calibration}, which is an approximate temperature needed to achieve calibration on a validation set via temperature scaling.

It hints towards an idea of the dataset-specific "generalization compensation", meaning no matter what loss function was used for training, a post-hoc calibration on a validation set is crucial for compensating the generalization effect from training to validation set. 

If we assume that the focal loss has a similar "generalization compensation" as the cross-entropy and recall that the focal calibration map is close to temperature scaling with a temperature of about 0.4-0.5 for commonly used $\gamma$ parameters, then the composition of temperature during training (around 0.4-0.5 for focal calibration) and "generalization compensation" (roughly quantified with temperature 2-2.5) gives an overall temperature on the validation set about 1 (as multiplication of temperature 0.4-0.5 and 2-2.5), which is precisely what the temperature is for focal loss post-hoc calibration.
This explains why focal loss tends to be quite well calibrated on the test set.

\begin{figure}[t]
\centering
\centerline{\includegraphics[width=0.5\textwidth,trim={0.4cm 0cm 0.2cm 0.2cm}, clip]{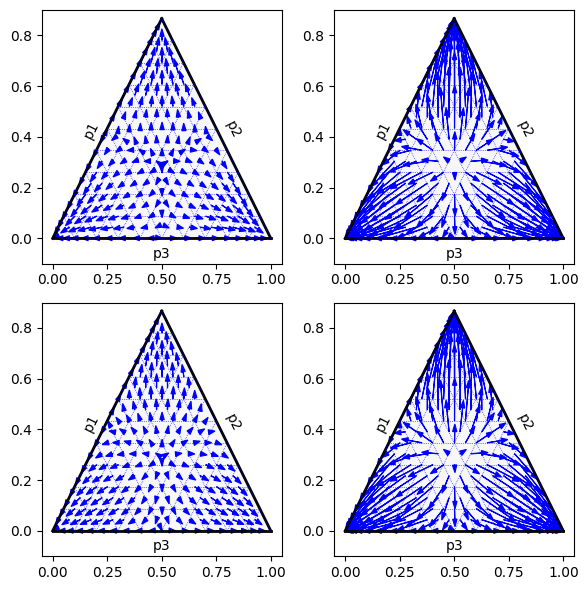}}
\caption{Focal calibration for Upper Left: $\gamma=1$, Upper Right: $\gamma=3$ parameters and temperature scaling maps for corresponding closest parameters: Bottom Left: $T=0.81$, Bottom Right: $T=0.46$ visualized as directional arrows over a uniform grid of three-dimensional probability simplex points. The focal calibration and temperature scaling maps the start of each arrow to its end. }
\label{activation}
 \vspace{0.4cm}
\end{figure}

\begin{figure}[t]
\centering
\centerline{\includegraphics[width=0.5\textwidth,trim={1.9cm 0cm 1.7cm 0.8cm}, clip]{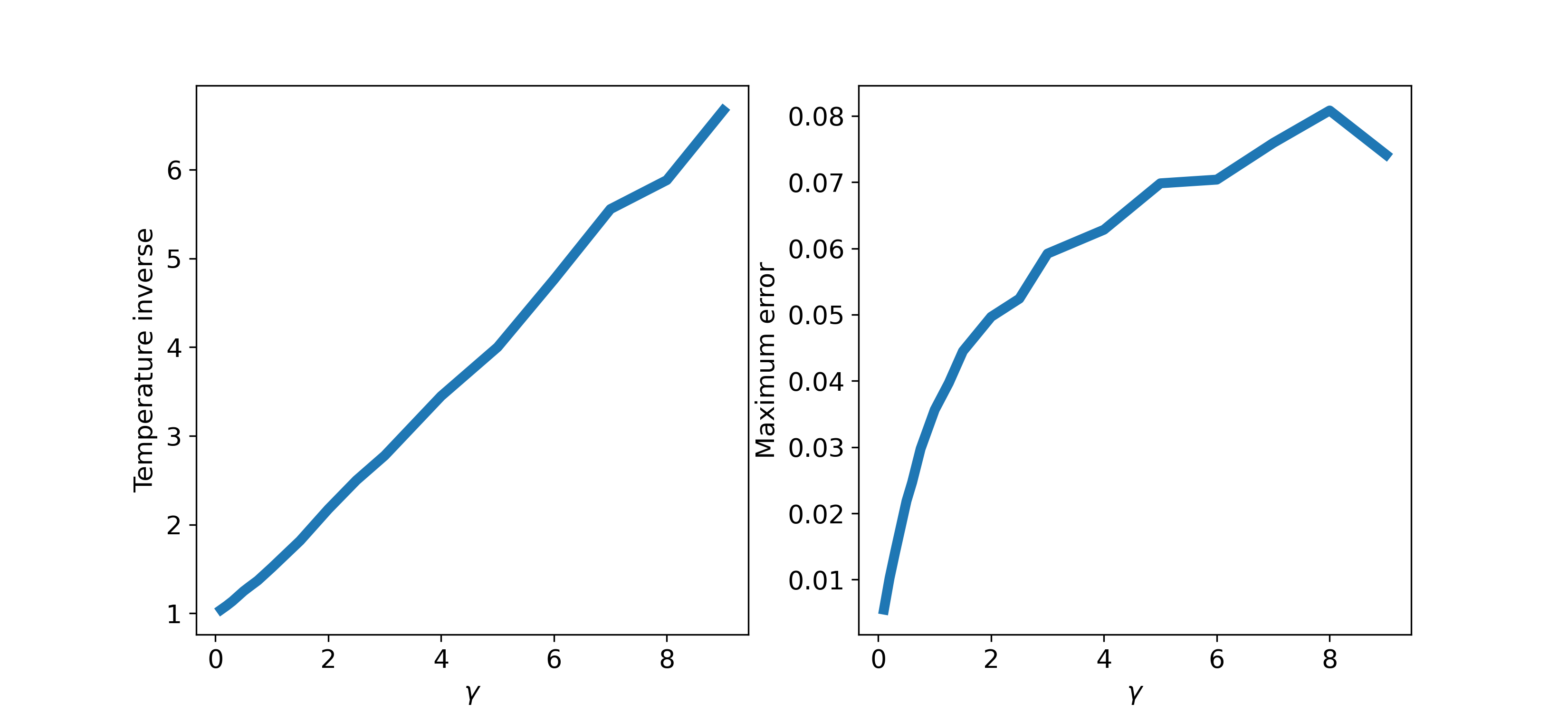}}
\caption{Left: relationship between focal calibration $\gamma$ and temperature scaling parameter $\frac{1}{T}$ chosen to minimize absolute deviation of these calibration maps for three-dimensional case. Right: dependency of maximum approximation error of these calibration maps on a focal parameter $\gamma$.}
\label{temperature}
 \vspace{0.6cm}
\end{figure}

\begin{figure}[t]
\centering
\centerline{\includegraphics[width=0.5\textwidth,trim={1.9cm 0cm 1.8cm 0.8cm}, clip]{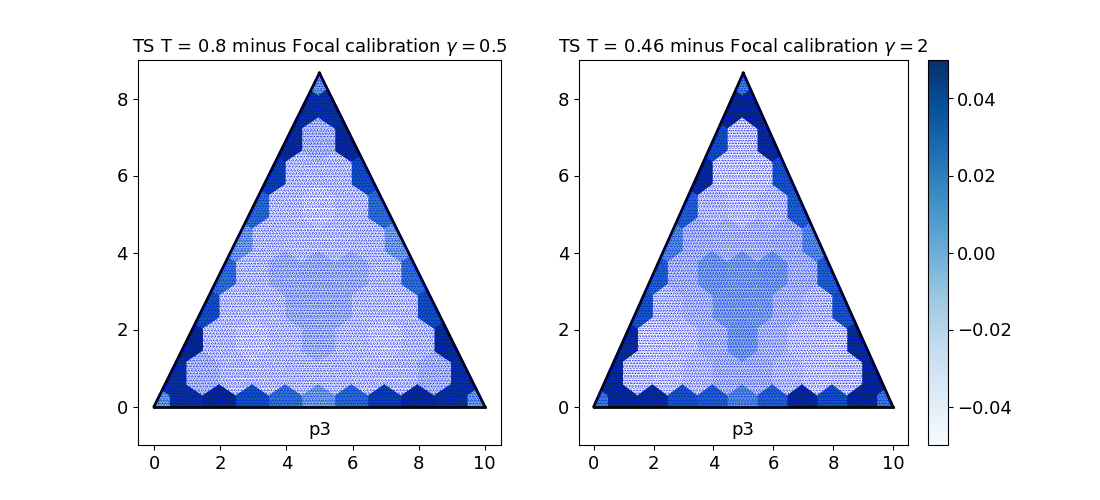}}
\caption{Temperature scaling minus focal calibration $FC(s)-TS(s)$ for the selected closest $\gamma, T$ pairs $\{(0.5, 0.8), (2, 0.46)\}$ heatmap for all points of the three-dimensional probability simplex.} 
\label{diff}
  \vspace{0.6cm}
\end{figure}


\subsection{Focal temperature scaling}

We suggest a new post-hoc calibration method called focal temperature scaling, which consists of composing focal calibration with parameter $\gamma_{ev}$ and temperature scaling with temperature $T$. 
More specifically, the method optimizes the pair of parameters $(\gamma_{ev}, T)$ on a validation set. First, instance class scores $s_1,...,s_n$ are converted into temperature-scaled softmax probabilities $q_1,...,q_n$. These probabilities are then transformed with a focal calibration map as defined by Eq.~(\ref{eq:focal}) in Proposition~\ref{prop:multiclass} parameterized by $\gamma_{ev}$, resulting in predicted probabilities $\hat{p}_1,...,\hat{p}_n$. The goal is to produce predictions that minimize the metric of interest (e.g. ECE).
As both transformations directly impact the model's confidence and are sufficiently different in a multiclass case, the focal temperature scaling should improve the model's calibration to a larger extent than standard temperature scaling. We do not limit the method to the models trained with a focal loss with parameter $\gamma_{tr}$ and, in case the training was performed with focal loss, do not require training and post-hoc calibration parameters to be equal: $\gamma_{ev} \neq \gamma_{tr}$. 

\section{Experiments}

In the experiments, we aim to verify whether applying focal temperature scaling leads to enhanced calibration compared to standard temperature scaling and suggest guidelines for practitioners.

Several hypotheses will be considered as the main focus of experiments:

\begin{itemize}
    \item Does the focal temperature scaling lead to improved calibration of the model trained with focal loss?
    \item Does the focal temperature scaling perform well when applied for other than focal losses and still receive an improved calibration compared to standard calibration methods?
    \item Is there any experimentally observed relationship between temperature scaling optimal parameter $T$ and focal loss train and calibration parameters $\gamma_{tr}, \gamma_{ev}$?

\end{itemize}

\subsection{Experiment settings}

We used an NVIDIA Tesla V100 GPU with 16 GB of VRAM for our experiments.

We consider CIFAR-10, CIFAR-100 \cite{krizhevsky2009cifar} and TinyImageNet \cite{deng2009imagenet} datasets with Resnet-50 architecture \cite{he2016deep}. We used 45000 / 5000 / 10000 images split as train, validation and test sets for CIFAR-10 and CIFAR-100 datasets. For TinyImageNet, we used 90000 / 10000 image split for train and validation sets and the TinyImageNet validation set as our test set.
Models were trained on the training set, and the validation set was used to find an optimal pair of temperature scaling and focal calibration parameters $(T, \gamma_{ev})$. A test set was used to measure the final model performance. The main evaluation metrics of interest were ECE, NLL and error rate.

The training setup was chosen to be mainly consistent with the studies \cite{mukhoti2020calibrating,ghosh2022adafocal}.
For CIFAR-10 and CIFAR-100 datasets, we started training with the SGD optimizer with a momentum of 0.9, weight decay $5e^{-4}$ and a learning rate of 0.1, which decreased to 0.01 after 150 epochs and 0.001 after 250 epochs until epoch 350. For TinyImageNet, we used the same optimizer but with a learning rate of 0.1 for the first 40 epochs, 0.01 for the next 20 epochs and 0.001 for the last 40 epochs. The batch size was 128 for CIFAR datasets and 64 for TinyImageNet. Results were evaluated on the last epoch following \cite{mukhoti2020calibrating,ghosh2022adafocal}.

We trained a model with cross-entropy loss, and more models with the focal loss with $\gamma_{train}\in\{1,2,3,5,7\}$, respectively. Also, we trained sample-dependent focal loss (FLSD-53) \cite{mukhoti2020calibrating} and AdaFocal \cite{ghosh2022adafocal} for comparison with recent studies.

During evaluation, we applied focal temperature scaling calibration on top of softmax predicted probabilities for all trained models. We iterated over a range of $[-0.5, -0.25, 0.05, 0.25, 0.37, 0.5, 0.75, 1, 5]$ for focal calibration parameter $\gamma_{ev}$ and over a range from 0.01 to 5 with a step of 0.01 for temperature parameter $T$. When presenting results for ECE,  we selected the optimal parameters $\gamma_{ev}, T$ based on the validation set ECE minimization. When considering results for log-loss, the hyperparameter choice criterion was log-loss. The results were compared with standard temperature scaling when only the parameter $T$ is selected over the same values grid and same choice criteria. An equal mass ECE measured on 15 bins was used for calibration estimate.

All code used for the implementation and experiments can be accessed in our GitHub repository \cite{komisarenko2024github}.

\subsection{Results of experiments}


\begin{table}[b!]
\caption{Test set performance for focal loss (defined by $\gamma_{tr}$), sample-dependent focal loss FLSD-53 \cite{mukhoti2020calibrating}, AdaFocal with default parameters as in \cite{ghosh2022adafocal} and cross-entropy trained models with temperature scaling versus focal temperature scaling (defined by $\gamma_{ev}$ and temperature). The CIFAR-100, CIFAR-10 and TinyImageNet datasets were used, and the results were averaged over 5 random seeds after applying temperature scaling. The mean result is reported together with the standard deviation after the $\pm$ sign. The optimal temperature is reported in brackets; temperature choice criteria were log-loss for log-loss evaluation and ECE for ECE evaluation. The best result for each metric and dataset is highlighted in bold formatting.}
\label{Cifar100}
\vskip 0.15in
\begin{center}
\begin{scriptsize}
\begin{sc}
\begin{tabular}{lrrrr} 
\hline
\multicolumn{4}{|c|}{CIFAR-100 dataset} \\ 
\hline
Approach & Accuracy & Log-loss & ECE \\
\hline
Cross-entropy & 77.6 $\pm$ 0.6 & 0.88 $\pm$ 0.02 (1.31) & 3.01 $\pm$ 0.43 (1.45) \\
 $\boldsymbol{+}\gamma_{ev}=-0.5$ & 77.6 $\pm$ 0.6 & 0.86 $\pm$ 0.02 (1.17) & 2.13 $\pm$ 0.42 (1.35) \\
\hline
Focal $\gamma_{tr}=1$ & \textbf{77.7} $\pm$ 0.3 & 0.83 $\pm$ 0.01 (1.05) & 1.66 $\pm$ 0.23 (1.15) \\
$\boldsymbol{+}\gamma_{ev}=-0.25$ & \textbf{77.7} $\pm$ 0.3 & 0.82 $\pm$ 0.01 (1.00) & 1.34 $\pm$ 0.17 (1.10) \\
\hline
Focal $\gamma_{tr}=3$ & 77.3 $\pm$ 0.5 & \textbf{0.81} $\pm$ 0.02 (0.87) & 1.28 $\pm$ 0.15 (0.91) \\
$\boldsymbol{+}\gamma_{ev}=0.05$ & 77.3 $\pm$ 0.5 & 0.82 $\pm$ 0.02 (0.87) & 1.23 $\pm$ 0.14 (0.95) \\

\hline
Focal $\gamma_{tr}=7$ & 76.3 $\pm$ 0.5 & 0.83 $\pm$ 0.01 (0.70) & 1.83 $\pm$ 0.20 (0.65)\\
$\boldsymbol{+}\gamma_{ev}=0.5$   & 76.3 $\pm$ 0.5 & 0.83 $\pm$ 0.01 (0.75) &\textbf{0.99} $\pm$ 0.07 (0.75)\\
\hline
FLSD-53 & 77.5 $\pm$ 0.5 & 0.88 $\pm$ 0.01 (1.20) & 1.89 $\pm$ 0.18 (1.27)\\
$\boldsymbol{+}\gamma_{ev}=0.25$ & 77.5 $\pm$ 0.3 & 0.88 $\pm$ 0.02 (1.05) & 1.68 $\pm$ 0.19 (1.15)\\
\hline
AdaFocal & 77.6 $\pm$ 0.2 & 0.91 $\pm$ 0.03 (1.40) & 2.96 $\pm$ 0.22 (1.52)\\
$\boldsymbol{+}\gamma_{ev}=0.25$ & 77.6 $\pm$ 0.2 & 0.93 $\pm$ 0.03 (1.48) & 2.71 $\pm$ 0.17 (1.60) \\

\multicolumn{4}{c}{} \\ 
\hline
\multicolumn{4}{|c|}{CIFAR-10 dataset} \\ 
\hline
Approach & Accuracy & Log-loss & ECE \\
\hline
Cross-entropy & \textbf{95.0} $\pm$ 0.1 & \textbf{0.16} $\pm$ 0.00 (1.59) & 1.03 $\pm$ 0.17 (1.72) \\
$\boldsymbol{+}\gamma_{ev}=1$ & \textbf{95.0} $\pm$ 0.1 & 0.17 $\pm$ 0.01 (2.20) & 0.71 $\pm$ 0.16 (2.36) \\
\hline
Focal $\gamma_{tr}=1$ & \textbf{95.0} $\pm$ 0.1 & 0.17 $\pm$ 0.01 (1.05) & 1.05 $\pm$ 0.25 (1.13) \\
$\boldsymbol{+}\gamma_{ev}=0.5$ & \textbf{95.0} $\pm$ 0.1 & 0.17 $\pm$ 0.01 (1.30) & 0.82 $\pm$ 0.35 (1.37) \\
\hline

Focal $\gamma_{tr}=3$ & 94.3 $\pm$ 0.3 & 0.19 $\pm$ 0.01 (0.75) & 1.48 $\pm$ 0.25 (0.77) \\
$\boldsymbol{+}\gamma_{ev}=5$ & 94.3 $\pm$ 0.3 & 0.20 $\pm$ 0.01 (1.83) & 0.93 $\pm$ 0.16 (1.87) \\
\hline
Focal $\gamma_{tr}=7$ & 93.1 $\pm$ 0.1 & 0.23 $\pm$ 0.01 (0.49) & 0.66 $\pm$ 0.07 (0.44)\\
$\boldsymbol{+}\gamma_{ev}=0.37$ & 93.1 $\pm$ 0.1 & 0.22 $\pm$ 0.01 (0.55) & \textbf{0.61} $\pm$ 0.10 (0.52)\\
\hline
FLSD-53 & 94.6 $\pm$ 0.1 & 0.18 $\pm$ 0.01 (1.40) & 1.30 $\pm$ 0.13 (1.40)\\
$\boldsymbol{+}\gamma_{ev}=1$ & 94.6 $\pm$ 0.1 & 0.17 $\pm$ 0.01 (1.30) & 1.23 $\pm$ 0.20 (1.33)\\
\hline
AdaFocal & 94.9 $\pm$ 0.2 & 0.18 $\pm$ 0.01 (1.58) & 1.80 $\pm$ 0.17 (1.63)\\
$\boldsymbol{+}\gamma_{ev}=5$ & 94.9 $\pm$ 0.2 & 0.20 $\pm$ 0.02 (3.70) & 0.95 $\pm$ 0.10 (3.70)\\

\multicolumn{4}{c}{} \\ 
\hline
\multicolumn{4}{|c|}{TinyImageNet dataset} \\ 
\hline
Approach & Accuracy & Log-loss & ECE \\
\hline
Cross-entropy & 49.9 $\pm$ 0.1 & 2.21 $\pm$ 0.00 (1.35) & 5.57 $\pm$ 0.31 (1.40) \\
$\boldsymbol{+}\gamma_{ev}=-0.5$ & 49.9 $\pm$ 0.1 & 2.19 $\pm$ 0.00 (1.30) & 3.66 $\pm$ 0.18 (1.40) \\
\hline
Focal $\gamma_{tr}=1$ & 50.6 $\pm$ 0.2 & 2.11 $\pm$ 0.01 (1.10) & 3.27 $\pm$ 0.13 (1.20) \\
$\boldsymbol{+}\gamma_{ev}=-0.5$ & 50.6 $\pm$ 0.2 & 2.10 $\pm$ 0.01 (1.10) & 1.86 $\pm$ 0.12 (1.18) \\
\hline
Focal $\gamma_{tr}=3$ & 51.6 $\pm$ 0.1 & 2.04 $\pm$ 0.01 (0.95) & 2.21 $\pm$ 0.14 (0.98) \\
$\boldsymbol{+}\gamma_{ev}=-0.25$ & 51.6 $\pm$ 0.1 & 2.03 $\pm$ 0.01 (0.95) & 1.63 $\pm$ 0.05 (0.97) \\
\hline
Focal $\gamma_{tr}=7$ & 50.9 $\pm$ 0.3 & \textbf{2.01} $\pm$ 0.02 (0.85) & 1.01 $\pm$ 0.02 (0.85)\\
$\boldsymbol{+}\gamma_{ev}=0.05$   & 50.9 $\pm$ 0.3 & \textbf{2.01} $\pm$ 0.02 (0.85) & \textbf{0.96} $\pm$ 0.00 (0.85)\\
\hline
FLSD-53 & \textbf{52.1} $\pm$ 0.1 & 2.02 $\pm$ 0.01 (0.95) & 2.06 $\pm$ 0.17 (0.98)\\
$\boldsymbol{+}\gamma_{ev}=-0.25$ & \textbf{52.1} $\pm$ 0.1 & 2.02 $\pm$ 0.01 (0.95) & 1.48 $\pm$ 0.22 (0.95)\\
\hline
AdaFocal & 51.6 $\pm$ 0.3 & 2.07 $\pm$ 0.03 (1.05) & 2.97 $\pm$ 0.67 (1.10)\\
$\boldsymbol{+}\gamma_{ev}=-0.5$ & 51.6 $\pm$ 0.3 & 2.07 $\pm$ 0.03 (1.05) & 1.89 $\pm$ 0.24 (1.09)\\

\bottomrule
\end{tabular}
\end{sc}
\end{scriptsize}
\end{center}
\vskip -0.1in
\end{table}

Results for all datasets are shown in Table~\ref{Cifar100}. Results are grouped per dataset. Next to each baseline approach row (cross-entropy, focal loss with different training parameters, FLSD-53, Adafocal), we put a row with the corresponding optimal parameter $\gamma_{ev}$ of the focal temperature scaling that attempts to improve the baseline approach. For each evaluation metric (Accuracy, Log-loss, ECE), we report the mean score over 5 random seeds and the standard deviation, separated with $\pm$ sign. In the brackets, we report the corresponding optimal temperature. 

The results show that focal temperature scaling could noticeably improve ECE among all baseline models while keeping the same accuracy and occasionally marginally losing on log-loss compared to standard temperature scaling.
The optimal temperature generally decreases with the growth of the focal training parameter $\gamma_{tr}$. Models trained with higher $\gamma_{tr}$ have generally better ECE but worse accuracy. Also, despite our results being mainly in line with the reported previously results \cite{mukhoti2020calibrating,ghosh2022adafocal}, we could not replicate the similar ECE score for models trained with AdaFocal despite our efforts to follow the authors' experiment guidelines closely \cite{ghosh2022adafocal} through the description provided in the original papers, supplementary materials and communication with the authors. We suspect that the reason may be in minor experiment settings, such as the exact value of the weight decay parameter. Still, the ECE scores of these methods could be improved further by applying focal temperature scaling instead of standard temperature scaling.

Moreover, we found the temperature choice criteria crucial for the experiment's results. When evaluating ECE, we could gain an extra $20\%$ relative improvement in average when optimizing for ECE on the validation set for the hyperparameter choice. For log-loss, the relative improvement is rather minor but consistent: around $1\%$.

The relationship between optimal temperature (chosen by minimizing ECE on a validation set) and focal calibration $\gamma_{ev}$ parameter for a different focal loss $\gamma_{tr}$ trained models is shown in Fig~\ref{linear}. It could be seen that the relationship is close to linear for all trained models. It implies that in practice, instead of fitting focal temperature scaling parameters $\gamma_{ev}, T$ over a 2-dimensional grid, we could iterate over fewer parameters by considering only parameters located on the line. To determine line slope and intercept, we still need to find at least two points of the line, which could be done by fixing two arbitrary $\gamma_{ev}$ and finding corresponding optimal $T$. If the standard temperature scaling will iterate over a grid of $m$ parameters, the focal temperature scaling requires $2 \cdot m$ iterations to find line coefficients and additional $m$ iterations over the line - resulting in a total of $3 \cdot m$ considered parameters, which is not that much overhead compared to standard temperature scaling. Still, in our experiments, we performed a full 2-dimensional grid search to find the best parameters and did not compare the performance of the full and optimized parameters search. The experiments with the optimized parameter search and a full understanding of this phenomenon remain for future works.


\begin{figure}[t]
\centering
\centerline{\includegraphics[width=0.5\textwidth,trim={1.8cm 0cm 1.7cm 0.8cm}, clip]{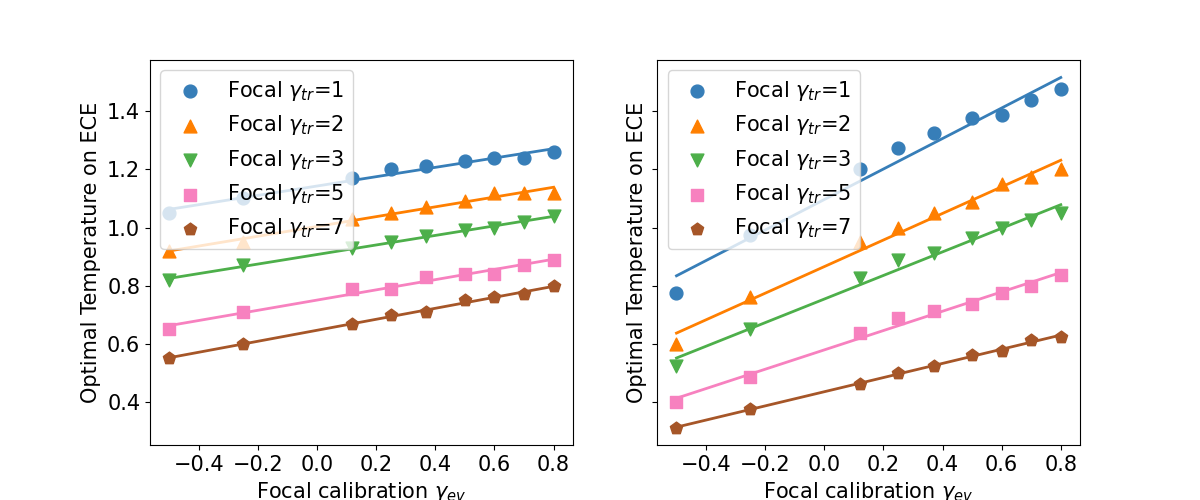}}
\caption{Relationship between optimal temperature $T$ and the focal calibration parameter $\gamma_{ev}$ chosen on ECE metric for focal loss trained models with different training parameters $\gamma_{tr}$. Left: CIFAR-100, right: CIFAR-10 dataset.}
\label{linear}
  \vspace{0.64cm}
\end{figure}

\section{Discussion and conclusions}



We derived a way to look into focal loss training as proper loss training with a confidence-raising transformation applied on top of softmax probabilities. The presented decomposition into a proper loss and a focal calibration could explain the recent successes of focal loss in calibration. The proper part of the decomposition applied on top of confidence-raised predictions pushes the probabilities to be slightly underconfident on the training set, which, due to generalization gap, leads to better calibration on the test set.  



We discovered a surprising connection between the focal calibration and temperature scaling; specifically, focal calibration could be bounded by two temperature scaling transformations with parameters $T=\frac{1}{\gamma+1}$ and $T=\frac{1}{\gamma+1-\frac{\log(\gamma+1)}{2}}$ for all logit values in the binary case. Moreover, for the multiclass case, we showed that focal calibration always increases the model's confidence, behaving similarly to temperature scaling with $T<1$. We also conducted experiments showing a close to linear dependency of parameters $\frac{1}{T}$ and $\gamma$ when minimizing deviation between these two transformations in three- and four-dimensional cases. While for the binary case, the difference between focal calibration and temperature scaling transformations for correspondent parameters is tiny, for the multiclass case, the deviation is larger and non-trivially varies across logit vectors. This implies that, for the multiclass case, composing focal and temperature scaling calibration into a single calibration method could be beneficial. We called this method focal temperature scaling, which is parametrized by focal calibration parameter $\gamma_{ev}$ and temperature $T$.

We applied the suggested focal temperature scaling method for three multiclass image classification datasets trained with cross-entropy, focal loss with different parameters and recent calibration-oriented approaches FLSD-53 and AdaFocal. The experiment results suggest a consistent improvement in calibration over all trained models compared to standard temperature scaling while keeping the accuracy unchanged by design and log-loss being approximately the same.

Moreover, the experiments showed a linear trend between optimal $\gamma_{ev}$ and $T$ parameters over different training models and datasets, suggesting that instead of fitting hyperparameters over a 2-dimensional values grid, we could iterate $\gamma_{ev}, T$ values only within the line (individual for each dataset and training method) without much performance loss. This implies that the focal temperature scaling could be applied when computational resources are limited. The detailed analysis of this dependency remains for the future work.



\begin{ack}
This work was supported by the Estonian Research Council under grant PRG1604, and by the Estonian Centre of Excellence in Artificial Intelligence (EXAI), funded by the Estonian Ministry of Education and Research.
\end{ack}







\bibliography{m268/m268}

\onecolumn
\appendix
\section{Supplementary Material}

\subsection{\textbf{Convexity of the focal loss}}
 \textbf{Proposition 1:} Focal loss is convex with respect to the first (prediction) argument $p$ for all $\gamma \geq 0$.
\begin{proof}
Let us start with the binary case. Consider special cases $\gamma=0$ and $\gamma=1$. For $\gamma=0$, focal loss equals cross-entropy, which is convex. For $\gamma=1$, the second derivative is $
        \frac{\partial^2 \bigl(-(1-p)\log(p) \bigr)}{\partial p^2} = \frac{p+1}{p^2}
    $, which is positive $\forall p \in (0, 1)$, which implies convexity.
    
    Now consider cases when $\gamma \notin \{0, 1\}$
    The first derivative of the binary focal loss with respect to the first argument is the following:
    \begin{equation*}
        \frac{\partial \bigl(-(1-p)^{\gamma}\log(p) \bigr)}{\partial p} = \gamma (1-p)^{\gamma-1}\log(p)-\frac{(1-p)^{\gamma}}{p}
    \end{equation*}
    The second derivative has a more complex expression:
    \begin{gather*}
        \frac{\partial^2 \bigl(-(1-p)^{\gamma}\log(p) \bigr)}{\partial p^2} = \frac{\gamma (1-p)^{\gamma-1}}{p}-
        \gamma(\gamma-1)(1-p)^{\gamma-2}\log(p)-
        \frac{-\gamma (1-p)^{\gamma-1}p-(1-p)^{\gamma}}{p^2}
    \end{gather*}
    Let us rewrite this expression as a quadratic expression w.r.t $\gamma$:
    \begin{gather*}
        \gamma^2 (-\log(p)(1-p)^{\gamma-2})+\gamma \bigl(\frac{2(1-p)^{\gamma-1}}{p}+  (1-p)^{\gamma-2}\log(p)\bigr)+\frac{(1-p)^{\gamma}}{p^2}
    \end{gather*}
    Trivially,  $\forall p \in (0, 1)$ all quadratic expression coefficients are non-negative (for example, linear term coefficient $\frac{2(1-p)^{\gamma-1}}{p}+ (1-p)^{\gamma-2}\log(p) = \frac{(1-p)^{\gamma-2}}{p}\bigl(2-2p+p\\log(p)\bigr) \geq 0$ because $2-2p+p\log(p)$ is monotonically decreasing on $(0, 1)$ and equals zero at the right end $p=1$), which implies that the quadratic expression is non-negative $\forall \gamma \geq 0$, which yields convexity of the focal loss.

    For the multiclass case, we will show that the Hessian matrix is positive semidefinite to prove convexity. For the focal loss, the diagonal elements are positive, as they coincide with the second derivative for the binary case, which we showed to be positive. The off-diagonal elements are all zero, as the multiclass focal loss depends only on the predicted probability of the actual class: $\frac{\partial^2 \bigl(-(1-p_i)^{\gamma}\log(p_i) \bigr)}{\partial p_j \partial p_i} = 0 \quad \forall i \neq j$. The matrix with such a structure is always positive semidefinite.
\end{proof}

\subsection{\textbf{Decomposition of the binary focal loss into proper loss and calibration map}}
\textbf{Proposition 2:}

Let $L(q, y)$ be a binary focal loss parametrized with some $\gamma > 0$. Then it could be decomposed into a bijective function $\hat{p}: [0, 1]\rightarrow [0, 1]$ (which could be seen as a fixed calibration map) and the proper loss $L^*: [0,1] \times \{0, 1\} \rightarrow \mathbb{R}^+$ such that $L^*(q, y)=L(\hat{p}^{-1}(q), y)$ 
 and the $\hat{p}$ is defined as:
\begin{gather*}
    \hat{p}(q) = \frac{1}{1 + \bigl(\frac{1-q}{q}\bigr)^{\gamma} \frac{(1-q)-\gamma q \cdot \log(q)}{q-\gamma (1-q) \cdot \log(1-q)}}
\end{gather*}

\begin{proof}

\textbf{Step 1}: let us show $\hat{p}$ is a bijection.

Let us consider partial losses of binary focal loss $L_{0}(q) = -q^{\gamma} \cdot \log(1-q)$ and $L_{1}(q)=-q^{\gamma} \cdot \log(1-q)$ and the mapping
$\hat{p}(q) = \frac{\frac{\partial L_{0}}{\partial q}(q)}{\frac{\partial L_{1}(1-q)}{\partial q} + \frac{\partial L_{0}(q)}{\partial q}}$. Partial losses are equal due to focal loss symmetry. However, for the loss on the actual positive, we are interested in the loss value w.r.t. deviation from the correct class $1-q$: $L_{1}(1-q)=-(1-q)^{\gamma} \cdot \log(q)$. 

If we insert the expression for the derivative presented in Proposition 1, we can easily verify that $\frac{\frac{\partial L_{0}}{\partial q}(q)}{\frac{\partial L_{1}(1-q)}{\partial q} + \frac{\partial L_{0}(q)}{\partial q}}$ equals to the expression stated by the proposition.

Let us show that $\hat{p}$ is a \textbf{surjection}.

The mapping is defined $\forall q \in [0, 1]$ as partial losses are differentiable. Both nominator and denominator have the same sign due to the convexity of the focal loss. Also, the denominator is never zero. 

The sum and the ratio of continuous functions is a continuous function, implying the expression $\hat{p}(q)$ is also a continuous function $\forall q\in (0, 1)$. 
Because $\hat{p}(q)$ is continuous and $\hat{p}(0) = 0$, $\hat{p}(1) = 1$, we receive that $\hat{p}(q)$ takes all possible values on $[0, 1]$ range, hence, it is a surjection.

Also, $\hat{p}$ is a \textbf{injection} because $\frac{\frac{\partial L_{0}}{\partial q}(q)}{\frac{\partial L_{1}(1-q)}{\partial q} + \frac{\partial L_{0}(q)}{\partial q}}=\frac{1}{1+\frac{\frac{\partial L_{1}(1-q)}{\partial q}}{\frac{\partial L_{0}(q)}{\partial q}}}$is monotonic on the whole $[0, 1]$ range as $\frac{\partial L_{1}(1-q)}{\partial q}$ is monotonically decreasing and $\frac{\partial L_{0}(q)}{\partial q}$ is monotonically increasing.

Finally, consider a predicted probability $q$ and the binary focal loss $L(q, y)$. Consider loss $L^{*}(\hat{p}, y)$ such that $L^{*}(\hat{p}, y) = L(q, y)$ (or, equivalently, $L^{*}(q, y) = L(\hat{p}^{-1}(q), y)$). Let us show the properness of the loss $L^{*}(\hat{p}, y)$.
For that, it is enough to show the existence of $w:[0, 1] \rightarrow \mathbb{R}^+$, such that $\int_{\epsilon}^{1-\epsilon} w(s)ds < \infty$ $\forall \epsilon > 0$ and partial losses satisfy the following expression: (this is the equivalent condition to properness as shown in Proposition 9 in \cite{buja2005loss}):
\begin{align*}
    \frac{\partial L_{0}^{*}(p)}{\partial p} = w(p)\cdot p \\
    \frac{\partial L_{1}^{*}(1-p)}{\partial p} = w(p)\cdot (1-p)
\end{align*}

Let us introduce $w(p) = \frac{\partial L^{*}_{1}(1-p)}{\partial p} + \frac{\partial L_{0}^{*}(p)}{\partial p}$, we could note that $w(p) > 0$ due to partial loss monotonicity and $w(p)$ is finite for all $p$ on $(0, 1)$.

The transformation $\hat{p}(q)$ could be simplified using the chain rule:

\begin{align*}
\hat{p}(q) = \frac{\frac{\partial L_{0}}{\partial q}(q)}{\frac{\partial L_{1}(1-q)}{\partial q} + \frac{\partial L_{0}(q)}{\partial q}} = 
\frac{\frac{\partial L_{0}^{*}}{\partial p}(p) \cdot \frac{\partial p}{\partial q}}{\frac{\partial L_{1}^{*}(1-p)}{\partial p}\frac{\partial p}{\partial q} + \frac{\partial L_{0}^{*}(p)}{\partial p}\frac{\partial p}{\partial q}}
= 
\frac{\frac{\partial L_{0}^{*}}{\partial p}(p)}{\frac{\partial L_{1}^{*}(1-p)}{\partial p} + \frac{\partial L_{0}^{*}(p)}{\partial p}}
\end{align*}

Here we used the fact that $L(q, y) = L^*(\hat{p}(q), y)$, which implies the chain rule relationship of derivatives: $\frac{\partial L(q, y)}{\partial q} = \frac{\partial L^*(\hat{p}, y)}{\partial \hat{p}} \frac{\partial \hat{p}}{\partial q}$.

If we plug in $w(p) = \frac{\partial L^{*}_{1}(1-p)}{\partial p} + \frac{\partial L^{*}_{0}(p)}{\partial p}$ and the simplified expression $\hat{p}(q)$ to the expression $w(p) \cdot p$ we receive the following:

\begin{align*}
w(p) \cdot \hat{p}(q) = & \left(\frac{\partial L^{*}_{1}(1-p)}{\partial p} + \frac{\partial L_{0}^{*}(p)}{\partial p}\right) \cdot  \frac{\frac{\partial L_{0}^{*}}{\partial p}(p)}{\frac{\partial L_{1}^{*}(1-p)}{\partial p} + \frac{\partial L_{0}^{*}(p)}{\partial p}} =  \frac{\partial L_{0}^{*}(p)}{\partial p}
\end{align*}

Similarly $w(p) \cdot (1-\hat{p}(q)) = \frac{\partial L_{1}^{*}(1-p)}{\partial p}$, this means that $L^{*}$ is a proper loss.

Finally, if we plug in the focal partial losses, we receive after simplifications:

\begin{align*}
    \hat{p}(q) = \frac{1}{1 + \bigl(\frac{1-q}{q}\bigr)^{\gamma} \frac{(1-q)-\gamma q \cdot \log(q)}{q-\gamma (1-q) \cdot \log(1-q)}}
\end{align*}
\end{proof}

\subsection{\textbf{Lower and upper bounds of the binary focal loss}}
\textbf{Proposition 3: } Let $FC(s)=\hat{p}_{\gamma}(\frac{1}{1+e^{-s}})$ be a focal calibration function applied on top of sigmoid with a logit $s$. 
    Then, the focal calibration could be bounded between two temperature scaling maps with $T=\frac{1}{\gamma+1}$ and  $T=\frac{1}{\gamma+1 - \frac{\log(\gamma+1)}{2}}$ such that 
    \begin{align*}
    \forall s < 0 \quad \frac{1}{1+e^{-\frac{s}{\gamma+1}}} > FC(s) > \frac{1}{1+e^{-\frac{s}{\gamma+1 - \frac{\log(\gamma+1)}{2}}}} \\
    \forall s \geq 0: \quad \frac{1}{1+e^{-\frac{s}{\gamma+1}}}< FC(s) < \frac{1}{1+e^{-\frac{s}{\gamma+1 - \frac{\log(\gamma+1)}{2}}}}
    \end{align*}
\begin{proof}

    Recalling the expression for $\hat{p}(q)$ from Proposition 2 and dividing the nominator and denominator by $q$ we receive the following: 

    \begin{align*}
        \hat{p}(q)=
        \frac{1}{1 + \bigl(\frac{1-q}{q}\bigr)^{\gamma} \frac{(1-q)-\gamma q \cdot \log(q)}{q-\gamma (1-q)\log(1-q)}} 
        = \frac{1}{1 + \bigl(\frac{1-q}{q}\bigr)^{\gamma} \frac{\frac{1-q}{q}-\gamma \cdot \log(q)}{1-\gamma \frac{1-q}{q} \log(1-q)}}
    \end{align*}
    Letting $q$ be the result of sigmoid transformation of the score $s$: $q=\frac{1}{1+e^{-s}}$ we could note that $\frac{1-q}{q}=e^{-s}$, $\log(q)=-\log(1+e^{-s})$ and $\log(1-q)=-\log(1+e^{s})$ and the expression simplifies to:
    \begin{equation*}
        FC(s)=\hat{p}(\frac{1}{1+e^{-s}}) = \frac{1}{1 + e^{-\gamma s} \left( \frac{e^{-s}+\gamma \\log(1 + e^{-s}) }{1 + \gamma e^{-s} \\log(1 + e^s)} \right)}
    \end{equation*}

    Trivially $FC(0) = \frac{1}{1+\frac{1+\gamma \log(2)}{1+\gamma \log(2)}} = 0.5$

    Let us find temperature scaling bounds for focal calibration. For this, we rearrange terms in focal calibration expression:
    \begin{equation*}
        \frac{1}{1 + e^{-\gamma s} \left( \frac{e^{-s}+\gamma \\log(1 + e^{-s}) }{1 + \gamma e^{-s} \\log(1 + e^s)} \right)} =  \frac{1}{1+e^{-\gamma s -s} \left( \frac{\gamma e^s \\log(1 + e^{-s}) + 1}{1 + \gamma e^{-s} \\log(1 + e^s)} \right)}
    \end{equation*}
    
It is enough to suggest bounds for expression $f_{\gamma}(s)=\left( \frac{\gamma e^s \\log(1 + e^{-s}) + 1}{1 + \gamma e^{-s} \\log(1 + e^s)} \right)$ with exponential functions $e^{c_{upper}s}, e^{c_{lower}s}$, then the overall bounds for focal calibration will be $\frac{1}{\gamma+1-c_{lower}}$ and $\frac{1}{\gamma+1-c_{upper}}$.

Firstly, let us show $f_{\gamma}(s) > e^{0} = 1$ for $s>0$ and $f_{\gamma}(s) < e^{0} = 1$ for $s < 0$:

Trivially $f_{\gamma}(s) > 0$ and $f_{\gamma}(0) = 1$. Also, the nominator is monotonically increasing (because $e^s \\log(1 + e^{-s}) = \log((1+e^{-s})^{e^s})$ is a composition of monotonically increasing functions: logarithm, $(1+\frac{1}{x})^x$ and the exponent), and the denominator monotonically decreasing, which implies that this expression is monotonically increasing. It implies that $e^{0}=1$ is the lower bound for $s>0$ and the upper bound for $s<0$.

Secondly, let us show $f_{\gamma}(s) < e^{\frac{\log(\gamma+1)}{2} \cdot s}$ for $s>0$ and $f_{\gamma}(s) > e^{\frac{\log(\gamma+1)}{2} s}$ for $s < 0$.

For that, keeping in mind that $f_{\gamma}(0) = 1$, it is enough to show that $max_{s \in \mathbb{R}} \frac{\partial f_{\gamma}(s)}{\partial s} < \frac{\gamma+1}{2}$.
The derivative for the $f_{\gamma}(s)$ is the following:

\begin{align*}
\frac{\partial f_{\gamma}(s)}{\partial s} = \frac{\gamma e^s \log(e^{-s}+1) - \frac{\gamma}{e^{-s}+1}}{\gamma \cdot e^{-s} \cdot \log(e^s+1)+1} -\frac{(\gamma e^s \log(e^{-s}+1)+1)(\frac{\gamma}{e^s+1} - \gamma e^{-s}\log(1+e^s))}{\left(\gamma \cdot e^{-s} \cdot \log(e^s+1)+1\right)^2}
\end{align*}

Using trivial bounds $\forall s \geq 0$: $\frac{\gamma}{e^s+1} > 0$, $\frac{\gamma}{e^{-s}+1} \geq \frac{\gamma}{2}$, $\gamma \log(2) \leq \gamma e^s \log(e^{-s}+1)=\gamma \log\bigl((1+e^{-s})^{e^s}\bigr) < \gamma \log(e) = \gamma$, $0 < \gamma e^{-s} \log(e^{s}+1) \leq \gamma \log(2)$,
we receive an upper bound for the derivative for $s\geq 0$: $\frac{\partial f_{\gamma}(s)}{\partial s} < \frac{\gamma-\frac{\gamma}{2}}{1}-\frac{(\gamma \log(2) + 1)(0-\gamma \log(2))}{(\gamma+1)^2}=\frac{\gamma^3+2\gamma^2(1+log^2(2)+\gamma(1+2\log(2))}{2(\gamma+1)^2} < \frac{\gamma+1}{2}$, because $\gamma^3+2\gamma^2(1+log^2(2)+\gamma(1+2\log(2)) - (\gamma+1)^3 = \gamma^2(2log^2(2)-1)+\gamma(2\log(2)-1)-1< 0 \quad \forall \gamma$ as $D=(4log^2(2)-4\log(2)+1) + 8log^2(2)-4= -3+12log^2(2)-4\log(2)<0$. The derivative of the function $\frac{\partial e^{\frac{\log(\gamma+1)}{2}s}}{\partial s} = \frac{\log(\gamma+1)}{2}e^{\frac{\log(\gamma+1)}{2}s} > \frac{\log(\gamma+1)}{2}$ is higher than derivative $\frac{\partial f_{\gamma}(s)}{\partial s}$ for $s>0$, which implies $f_{\gamma}(s) < e^{\frac{\log(\gamma+1)s}{2}}$.

Based on the same considerations, we can show that $f_{\gamma}(s) < e^{\frac{\log(\gamma+1)}{2}s}$ for negative logits $s<0$:
$\frac{\gamma}{e^s+1} \geq \frac{\gamma}{2}$, $\frac{\gamma}{e^{-s}+1} \geq 0$, $0 \leq \gamma e^s \log(e^{-s}+1) < \gamma \log(2)$, $\gamma \log(2) \leq \gamma e^{-s} \log(e^{s}+1) < \gamma$,
we receive an upper bound for the derivative for $s\leq 0$:
$\frac{\partial f_{\gamma}(s)}{\partial s} < \frac{\gamma \log(2) - 0}{\gamma \log(2)+1} - \frac{\frac{\gamma}{2}-\gamma}{(\gamma+1)^2} < \frac{\gamma+1}{2} \quad \forall \gamma \geq 0$ (because $-\frac{\frac{\gamma}{2}-\gamma}{(\gamma+1)^2} < \frac{\gamma+1}{4}$ due to $\gamma^3+3\gamma^2+\gamma+1 > 0 \quad \forall \gamma>0$ and $\frac{\gamma \log(2) - 0}{\gamma \log(2)+1} < \frac{\gamma+1}{4}$ as $\gamma^2(\log(2))+\gamma(1-3\log(2)) + 1 > 0 \quad \forall \gamma$)

\end{proof}

\subsection{\textbf{Decomposition of the multiclass focal loss into proper loss and calibration map}}

\textbf{ Proposition 4:} Let $L(q, y)$ be a multiclass focal loss parametrized with some $\gamma > 0$. Then, it could be deconstructed into a composition of a bijective function $\hat{p}(q)$ and a proper loss $L^{*}(q, y)$ such that:
     \begin{align*}
        \hat{p}_j(q_1,...,q_n) = \frac{\frac{1}{(1-q_j)^{\gamma} \cdot (\frac{\gamma \cdot \log(q_j)}{1-q_j} - \frac{1}{q_j})}}{\sum_{k=1}^{n} \frac{1}{(1-q_k)^{\gamma} \cdot (\frac{\gamma \cdot \log(q_k)}{1-q_k} - \frac{1}{q_k})}} \quad \forall j=1..n\\
        L^{*}(q, y) = L(\hat{p_1}^{-1}(q_1,...,q_n),...,\hat{p_n}^{-1}(q_1,...,q_n))
\end{align*}

\begin{proof}
    Let us consider a multiclass focal loss and a transformation 
    \begin{align*}
        \hat{p}_j(q_1,...,q_n) =
        \frac{\frac{1}{\frac{\partial L_{FL} (p_j, y_j=1)}{\partial p_j}}}{\sum_{i=1}^n \frac{1}{\frac{\partial L_{FL} (p_i, y_i=1)}{\partial p_i}}} = 
        \frac{\frac{1}{(1-q_j)^{\gamma} \cdot (\frac{\gamma \cdot \log(q_j)}{1-q_j} - \frac{1}{q_j})}}{\sum_{k=1}^{n} \frac{1}{(1-q_k)^{\gamma} \cdot (\frac{\gamma \cdot \log(q_k)}{1-q_k} - \frac{1}{q_k})}}
\end{align*}

Based on the same reasoning as for binary case, we can see that $\hat{p}(q)$ is a bijection. 

Let us consider a multiclass loss $L^{*}(p, y) = L(\hat{p}^{-1}(q_1,...,q_n),...,\hat{p_n}^{-1}(q_1,...,q_n))$ and show it is a proper loss.

For that, we consider a ground truth distribution $p_{true} \in \Delta^n$, predicted probability $p \in \Delta^n$ and the conditional risk associated with the loss $L^{*}$:

\begin{equation*}
    R(p, p_{true}) = \sum_{i=1}^{n} p_{i,true} L^{*}(p_i, y_i=1)
\end{equation*}

For properness, it is enough to show that $argmin_{p \in \Delta^n} R(p, p_{true}) = p_{true}$ for all $p_{true}$. Given that $L^{*}(\hat{p}(q), y) = L_{FL}(q, y)$, we will prove the equivalent $argmin_{q \in \Delta^n} \bigl(\sum_{i=1}^{n} p_{i,true} L(q_i, y_i=1) \bigr) = \hat{p}^{-1}(p_{true}) \quad p_{true}$.

To solve this constrained optimization problem, we could introduce a Lagrange multiplier and consider critical points:

    \begin{equation*}
    \left\{
      \begin{array}{ll}
    \left.\frac{\partial \bigl(\sum_{i=1}^{n} p_{i,true} L(q_i, y_i=1) - \lambda \cdot (\sum_{i=1}^{n} q_i-1) \bigr)}{\partial q_j}\right|_{q_j} = 0, \\ \quad \forall j \in \{1, \ldots, n\} \\
    \sum_{i=1}^{n} q_i = 1
    \end{array}
    \right.
    \label{eq:eq}
    \end{equation*}

Because for multiclass focal loss only the correspondent to the actual class predicted probability matters, meaning $\frac{\partial L(q_j, y_j=1)}{\partial q_i} = 0 \quad \forall j \neq i$, we could simplify the system of equation further:
    
    \begin{equation*}
    \left\{
      \begin{array}{ll}
     p_{j,true} \frac{\partial L(q_j, y_j=1)}{\partial q_j} - \lambda = 0\\ \quad \forall j \in \{1, \ldots, n\} \\
    \sum_{i=1}^{n} q_i = 1
    \end{array}
    \right.
    \label{eq:eq1}
    \end{equation*}
    
    Recalling that $\sum_{i=1}^n p_{i,true} = 1$, we could express the Lagrange multiplier as $\lambda=\sum_{i=1}^n \frac{1}{\frac{\partial L(q_i, y_i=1)}{\partial q_j}} \quad \forall j$ and the ground truth is equal to $p_{j,true}=\frac{\frac{1}{\frac{\partial L(q, y_i=1)}{\partial q_j}}}{\sum_{i=1}^n \frac{1}{\frac{\partial L(q, y_i=1)}{\partial q_j}}}=\hat{p}_j$, which implies $q_j=\hat{p}^{-1}(p_{true})$ and, therefore, $L^{*}(\hat{p}(q), y)$ is a proper loss.
    
\end{proof}

\subsection{The confidence-raising effect of the multiclass focal calibration}

\textbf{Proposition 5:} Let the model's output for a test instance be a predicted probability vector $q=(q_1,...,q_n)$ such that the predicted class is $j$: $j = argmax_{1 \leq i \leq n} (q_i)$. Then, if we apply focal calibration to the prediction $q$, the $j$-th coordinate of the obtained vector will be higher or equal to the highest coordinate of the initial prediction $q$: $\hat{p}(q)_j \geq q_j$

\begin{proof}
Let us compare the highest predicted probability $q_j$ and the correspondent probability after applying focal calibration:

\begin{align}
    q_j - \hat{p}(q)_j = q_j - \frac{\frac{1}{(1-q_j)^{\gamma} \cdot (\frac{\gamma \cdot \log(q_j)}{1-q_j} - \frac{1}{q_j})}}{\sum_{k=1}^{n} \frac{1}{(1-q_k)^{\gamma} \cdot (\frac{\gamma \cdot \log(q_k)}{1-q_k} - \frac{1}{q_k})}} = q_j\bigl(1 - \frac{\frac{1}{(1-q_j)^{\gamma} \cdot (\frac{\gamma \cdot \log(q_j)}{1-q_j} - \frac{1}{q_j})}}{q_j \sum_{k=1}^{n} \frac{1}{(1-q_k)^{\gamma} \cdot (\frac{\gamma \cdot \log(q_k)}{1-q_k} - \frac{1}{q_k})}}\bigr) = \\
    q_j\bigl( \frac{{q_j \sum_{k=1}^{n} \frac{1}{(1-q_k)^{\gamma} \cdot (\frac{\gamma \cdot \log(q_k)}{1-q_k} - \frac{1}{q_k})}} - \frac{1}{(1-q_j)^{\gamma} \cdot (\frac{\gamma \cdot \log(q_j)}{1-q_j} - \frac{1}{q_j})}}{{q_j \sum_{k=1}^{n} \frac{1}{(1-q_k)^{\gamma} \cdot (\frac{\gamma \cdot \log(q_k)}{1-q_k} - \frac{1}{q_k})}}}\bigr) = \frac{{q_j \sum_{k=1}^{n} \frac{1}{(1-q_k)^{\gamma} \cdot (\frac{\gamma \cdot \log(q_k)}{1-q_k} - \frac{1}{q_k})}} - \frac{1}{(1-q_j)^{\gamma} \cdot (\frac{\gamma \cdot \log(q_j)}{1-q_j} - \frac{1}{q_j})}}{\sum_{k=1}^{n} \frac{1}{(1-q_k)^{\gamma} \cdot (\frac{\gamma \cdot \log(q_k)}{1-q_k} - \frac{1}{q_k})}}
\end{align}

The function $f(q_k)=\frac{1}{(1-q_k)^{\gamma} \cdot (\frac{\gamma \cdot \log(q_k)}{1-q_k} - \frac{1}{q_k})} = \frac{1}{\frac{\partial \bigl(-(1-p)^{\gamma}\log(p) \bigr)}{\partial p}}$ is monotonically decreasing because the focal loss derivative is monotonically increasing due to convexity of the focal loss. Moreover, $f(q_k) \geq 0$ due to monotonicity and the limit value on the left end: $lim_{q_k\rightarrow 0^+} \bigl( \frac{1}{(1-q_k)^{\gamma} \cdot (\frac{\gamma \cdot \log(q_k)}{1-q_k}}\bigr) = 0$.
It implies that the denominator is negative $\forall q \in (0, 1)$ and $\forall \gamma \geq 0$. 
Let us show the nominator is positive. For this, we will first prove the following two statements:

\begin{enumerate}
    \item Let $f(x)$ be a concave function defined on $[0, 1]$ range such that $f(0)=0$. Then $\forall \lambda \in [0, 1]$: $f(\lambda x) \geq \lambda f(x)$.

    \begin{proof}
        $f(\lambda x) = f(\lambda x + (1-\lambda) 0) \geq \lambda f(x) + (1-\lambda)f(0) = \lambda f(x)$
    \end{proof}

    \item The function $f(q)=\frac{1}{(1-q)^{\gamma} \cdot (\frac{\gamma \cdot \log(q)}{1-q} - \frac{1}{q})}$ is concave on the whole range $\forall q \in (0, 1)$.
    \begin{proof}
        Let $L(p)=-(1-q)^{\gamma} \log(q)$ denotes a focal loss. Then, $f(q)=\frac{1}{L'(q)}$. Let us compute first and second derivatives of $f(q)$ expressed in terms of focal loss derivatives:
        $f'(q)=(\frac{1}{L'(q)})'=\frac{L^{''}(q)}{(L'(q))^2}$. Also, $f''(q)=(\frac{L^{''}(q)}{(L'(q))^2})'=\frac{L^{'''}(q)\cdot (L'(q))^2 - L^{''}(q)\cdot 2 L'(q) \cdot L^{''}(q)}{(L'(q))^4} = \frac{L'(q) \cdot (L^{'''}(q)\cdot L'(q) - 2 (L^{''}(q))^2)}{(L'(q))^4}$. 


        Let us show that the first derivative is monotonically decreasing: consider $f'(q)=(\frac{1}{L'(q)})'=\frac{L^{''}(q)}{(L'(q))^2}$. After inserting the first and second derivatives of the focal loss (which we derived in Proposition 1), we receive:

        \begin{align*}
            f'(q)=(\frac{1}{L'(q)})'=\frac{L^{''}(q)}{(L'(q))^2} = \frac{\frac{\gamma (1-q)^{\gamma-1}}{q}-
        \gamma(\gamma-1)(1-q)^{\gamma-2}\log(q)-
        \frac{-\gamma (1-q)^{\gamma-1}q-(1-q)^{\gamma}}{q^2}}{\bigl(\gamma (1-q)^{\gamma-1}\log(q)-\frac{(1-q)^{\gamma}}{q}\bigr)^2} = \\
        \frac{(1 - q)^{-\gamma} \left( - (\gamma - 1) \gamma q^2 \log(q) - (q - 1) ((2\gamma - 1) q + 1) \right)}{(\gamma q \log(q) + q - 1)^2}
        \end{align*}
        The $(1-q)^{-\gamma}$ term, which is monotonically decreasing, will dominate other terms, hence, the overall expression is monotonically decreasing, and, therefore, the $f(q)$ is concave.
    \end{proof}
\end{enumerate}

Given these two statements, and considering the function $f(q_k)=\frac{1}{(1-q_k)^{\gamma} \cdot (\frac{\gamma \cdot \log(q_k)}{1-q_k} - \frac{1}{q_k})}$, we need to prove that $q_j \sum_{i=1}^n f(q_i) \geq f(q_j)$, meaning the non-negative of the nominator, to finally prove that $q_j \leq \hat{p}(q)_j$. Using the substitution $\lambda = \frac{q_i}{q_j} \leq 1$ and applying the inequality from the first statement, we receive:
$q_j \sum_{i=1}^n f(q_i) = q_j \sum_{i=1}^n f(\frac{q_i}{q_j}q_j) \geq \sum_{i=1}^n \bigl(q_j \frac{q_i}{q_j} f(q_j) \bigr) = f(q_j) \sum_{i=1}^n q_i = f(q_j)$.

\end{proof}

\end{document}